\documentclass[runningheads]{llncs}

\usepackage{eccv}

\usepackage{eccvabbrv}

\usepackage{graphicx}
\usepackage{booktabs}

\usepackage[accsupp]{axessibility}  

\usepackage{hyperref}

\usepackage{orcidlink}

\usepackage{booktabs}       
\usepackage{amsfonts}       
\usepackage{caption}
\usepackage{graphicx}
\usepackage{multirow}
\usepackage{rotating}
\usepackage{float}
\usepackage{breqn}
\usepackage{algorithm}
\usepackage{algorithmic}
\usepackage{wrapfig}
\usepackage{mathtools}
\usepackage{pifont}
\usepackage{listings}
\usepackage{bm}
\usepackage[T1]{fontenc}
\usepackage{xcolor}
\definecolor{mygreen}{HTML}{4FAD5B}
\usepackage{colortbl}
\definecolor{SkyBlue}{RGB}{71, 189, 234}

\begin{document}

\title{\includegraphics[width=0.7cm]{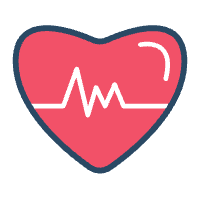}\textcolor{SkyBlue}{FreqPhys:} Repurposing Implicit Physiological Frequency Prior for Robust Remote Photoplethysmography}

\titlerunning{FreqPhys: Repurposing Implicit Physiological Frequency Prior for rPPG}

\author{Wei Qian\inst{1} \and
Dan Guo\inst{1,2} \and
Jinxing Zhou\inst{3} \and
Bochao Zou\inst{4} \and
Zitong Yu\inst{5} \and \\
Meng Wang\inst{1,2}}

\authorrunning{W. Qian et al.}

\institute{Hefei University of Technology \and
Hefei Comprehensive National Science Center \and
MBZUAI \and
University of Science and Technology Beijing \and
Great Bay University
}

\maketitle

\begin{abstract}
Remote photoplethysmography (rPPG) enables contactless physiological monitoring by capturing subtle skin-color variations from facial videos. However, most existing methods predominantly rely on time-domain modeling, making them vulnerable to motion artifacts and illumination fluctuations, where weak physiological clues are easily overwhelmed by noise. To address these challenges, we propose \textcolor{SkyBlue}{\textbf{FreqPhys}}, a frequency-guided rPPG framework that explicitly leverages physiological frequency priors for robust signal recovery. Specifically, \textcolor{SkyBlue}{\textbf{FreqPhys}} first applies a \textit{Physiological Bandpass Filtering} module to suppress out-of-band interference, and then performs \textit{Physiological Spectrum Modulation} together with \textit{adaptive spectral selection} to emphasize pulse-related frequency components while suppress residual in-band noise. A \textit{Cross-domain Representation Learning} module further fuses these spectral priors with deep time-domain features to capture informative spatial--temporal dependencies. Finally, a frequency-aware conditional diffusion process progressively reconstructs high-fidelity rPPG signals. Extensive experiments on six benchmarks demonstrate that \textcolor{SkyBlue}{\textbf{FreqPhys}} yields significant improvements over state-of-the-art approaches, particularly under challenging motion conditions. It highlights the importance of explicitly modeling physiological frequency priors. 
The source code will be released.
  \keywords{Remote photoplethysmography \and Physiological frequency priors \and Conditional diffusion}
\end{abstract}

\section{Introduction}
Physiological signals such as heart rate (HR), heart rate variability (HRV), and respiration frequency (RF) are fundamental indicators of physical and mental health.
Traditional electrocardiogram (ECG) and photoplethysmography (PPG) systems require direct skin contact, which may cause discomfort and limit their applicability.
Remote photoplethysmography (rPPG)~\cite{verkruysse2008remote} has therefore emerged as a promising non-invasive alternative optical technique, enabling applications in health monitoring~\cite{huang2023challenges}, face anti-spoofing~\cite{yu2021transrppg}, and psychological stress assessment~\cite{gedam2020automatic}, among others.
Despite its potential, reliably extracting rPPG signals, subtle skin color changes caused by blood volume fluctuations in facial videos captured by commodity cameras, remains challenging, particularly under motion and illumination variation.

\begin{figure*}[t]
\centering
\includegraphics[width=0.9\linewidth]{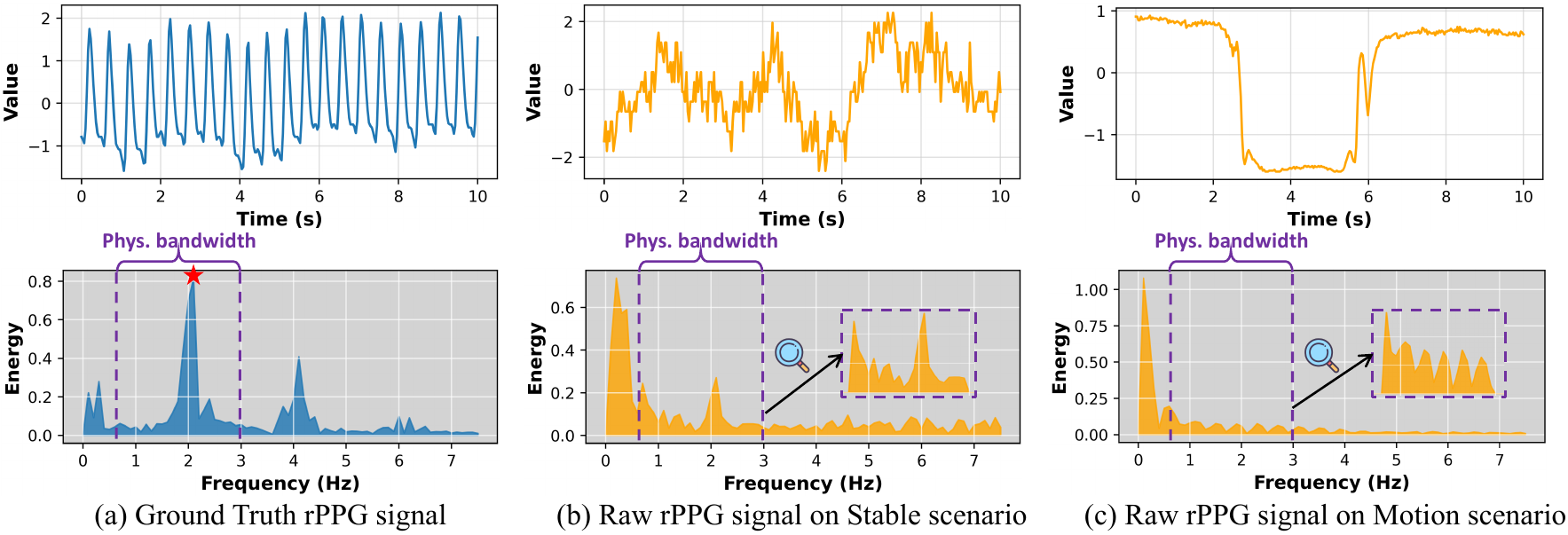}
\caption{\textbf{Visualization of the differences between ground-truth and raw rPPG signals in time and frequency domains.}
(a) Ground-truth rPPG signal, whose spectrum exhibits clear physiological priors: (\textcolor{magenta}{\emph{i}}) \textbf{Physiological Band Constraint}, where the spectral energy is concentrated within the physiological bandwidth of [0.66, 3.0] Hz corresponding to normal heart rate ranges, and (\textcolor{magenta}{\emph{ii}}) \textbf{Dominant Peak Property}, where a prominent spectral peak within this band reflects the periodic cardiac rhythm, while other in-band components remain comparatively low. The dominant frequency peak (marked by {\color{red}\ding{72}}) corresponds to heart rate and is converted to beats per minute by multiplying the frequency by 60.
(b)(c) Raw rPPG signals extracted from facial videos under stable and motion conditions, computed by averaging green-channel pixel intensities over time~\cite{wang2016algorithmic}.
In the time domain, physiological signals are heavily entangled with noise. 
In the frequency domain, noise manifests as both out-of-band interference and residual in-band components, predominantly concentrated at lower frequencies. 
While the stable scenario preserves a relatively distinct spectral peak, motion artifacts disperse the in-band energy distribution, making reliable denoising considerably more challenging.}
\label{fig:motivation}
\end{figure*}

Early rPPG studies~\cite{verkruysse2008remote,poh2010non,de2013robust,wang2016algorithmic} mainly relied on handcrafted signal processing pipelines, which depended heavily on specific assumptions.
With the rapid development of deep learning, numerous data-driven rPPG models have been proposed~\cite{yu2019remote1,niu2020video,lu2021dual,liu2023efficientphys,qian2024dual,zou2025rhythmmamba}. 
Although these methods achieve strong performance under stable conditions, their performance often degrades in complex scenarios due to noise sources such as motion artifacts and illumination variations~\cite{qian2025physdiff,shao2025remote}. 
More recently, diffusion models have been introduced for rPPG estimation~\cite{chen2024diffphys,qian2025physdiff}, leveraging their powerful capability to model complex noise distributions and progressively recover clean signals. 
Nevertheless, these approaches predominantly operate in the time domain, where noise appears in irregular and highly entangled forms (Fig.~\ref{fig:motivation}(b)(c)), making it inherently challenging to disentangle physiological components from interference.
Several recent studies have attempted to incorporate frequency-domain information into deep rPPG models. 
Typical strategies include introducing auxiliary spectral losses~\cite{yu2023physformer++,sun2024contrast,zou2025rhythmmamba}, embedding Fourier-based modules to enrich feature representations~\cite{zou2025rhythmmamba}, or synthesize frequency-modulated augmentations~\cite{yue2023facial}. 
While these efforts demonstrate the potential value of spectral clues, they do not explicitly leverage the intrinsic physiological priors that characterize cardiac-driven rPPG signals, nor do they fully address the structured spectral distortions introduced by motion artifacts.

In this work, we revisit the frequency domain and show that rPPG signals, driven by quasi-periodic cardiac rhythms, exhibit strong and well-defined spectral priors~\cite{gideon2021way,speth2023non}, as illustrated in Fig.~\ref{fig:motivation}(a): (\textcolor{magenta}{\emph{i}}) \textbf{Physiological Band Constraint:} the spectral energy is largely concentrated within a physiological frequency band, typically [0.66, 3.0] Hz, corresponding to the normal heart-rate range; and (\textcolor{magenta}{\emph{ii}}) \textbf{Dominant Peak Property:} a prominent spectral peak emerges within this band, reflecting the periodic cardiac rhythm, while other in-band components remain comparatively weak. 
To further illustrate this phenomenon, Fig.~\ref{fig:motivation}(b)(c) presents raw rPPG signals extracted under stable and motion conditions by computing the mean green-channel intensity over time~\cite{wang2016algorithmic}. 
While time-domain signals exhibit severe entanglement between physiological patterns and noise, their frequency-domain representations reveal two distinct types of interference: (\textcolor{magenta}{\emph{i}}) \textbf{out-of-band components dominated by low-frequency drift}, and (\textcolor{magenta}{\emph{ii}}) \textbf{residual in-band disturbances that blur the cardiac peak under motion}. 
These observations naturally raise a central question: 
\textbf{\textit{How can we simultaneously suppress out-of-band and in-band noise while preserving physiologically meaningful spectral structures?}}

To address this challenge, we propose \textcolor{SkyBlue}{\textbf{FreqPhys}}, a diffusion-based rPPG framework that repurposes implicit physiological frequency priors to guide robust signal recovery. 
Specifically, to remove out-of-band interference, we first apply a \textit{Physiological Bandpass Filter} that retains only cardiac-relevant spectral components. 
To further emphasize true cardiac harmonics and mitigate residual in-band noise, we introduce \textit{Physiological Spectrum Modulation} together with \textit{Adaptive Spectral Selection}, enabling dynamic enhancement and suppression within the physiological frequency range. 
Finally, we design a \textit{Cross-domain Representation Learning} module that employs cross-attention to fuse spectral priors with deep time-domain representations, providing the diffusion model with explicit physiological regularities and informative temporal dependencies.
Our main contributions are summarized as follows:
\begin{itemize}
    \item We highlight the importance of explicitly leveraging implicit physiological frequency priors for robust rPPG estimation, revealing the structured spectral characteristics induced by cardiac rhythms.
    \item We propose \textcolor{SkyBlue}{\textbf{FreqPhys}}, a frequency-aware diffusion framework that integrates physiological frequency denoising with cross-domain representation learning for robust rPPG signal reconstruction.
    \item Unlike prior diffusion-based rPPG methods that operate solely in the time domain, our method incorporates frequency-domain conditioning to better capture the quasi-periodic structure of rPPG signals.
    \item Extensive experiments on six public datasets demonstrate that our method achieves superior performance in both robustness and generalization, especially under challenging motion conditions.
\end{itemize}

\section{Related Work}
\textbf{Traditional rPPG Methods.} Early rPPG studies primarily relied on classical signal processing techniques, including spatial averaging or color–separation–based approaches such as GREEN~\cite{verkruysse2008remote}, ICA~\cite{poh2010non}, CHROM~\cite{de2013robust}, and POS~\cite{wang2016algorithmic}.
These methods are computationally efficient and easy to implement; however, they typically depend on strong assumptions (e.g., stable illumination conditions or linear color reflection models). 
As a result, their performance degrades considerably in unconstrained complex scenarios where motion artifacts and illumination variations are prevalent.

\noindent\textbf{Deep Learning–based rPPG Methods.} The success of deep learning has led to a wide range of data-driven rPPG models. 
CNN-based architectures, such as DeepPhys~\cite{chen2018deepphys}, PhysNet~\cite{yu2019remote1}, CVD~\cite{niu2020video}, and TS-CAN~\cite{liu2020multi}, focus on learning local spatial–temporal representations from facial videos. 
To capture longer temporal dependencies, Transformer-based models, including PhysFormer~\cite{yu2022physformer}, EfficientPhys~\cite{liu2023efficientphys}, and Dual-TL~\cite{qian2024dual}, have been proposed. 
More recently, state-space and Mamba-style architectures (e.g., PhysMamba~\cite{luo2024physmamba} and RhythmMamba~\cite{zou2025rhythmmamba}) have been explored to enlarge the temporal receptive field while maintaining efficient sequential modeling. 
In addition, diffusion-based frameworks, such as DiffPhys~\cite{chen2024diffphys} and PhysDiff~\cite{qian2025physdiff}, leverage generative denoising processes to progressively recover clean physiological signals from noisy temporal input.

\noindent\textbf{Frequency-domain rPPG Methods.} Beyond purely time-domain modeling, several studies have explored incorporating spectral clues into rPPG estimation. 
Existing approaches generally fall into three categories: 
(\emph{\textcolor{magenta}{i}}) introducing frequency-based loss functions to regularize training, as in PhysFormer++~\cite{yu2023physformer++}, ContrastPhys~\cite{sun2024contrast}, and RhythmMamba~\cite{zou2025rhythmmamba}; 
(\emph{\textcolor{magenta}{ii}}) designing frequency-aware representations, such as Fourier-based blocks or spectral embeddings~\cite{zou2025rhythmmamba}; and 
(\emph{\textcolor{magenta}{iii}}) leveraging frequency-guided augmentation or synthesis strategies, for example, generating negative samples through spectral manipulation~\cite{yue2023facial}. 
However, these approaches mainly incorporate spectral information as auxiliary supervision or representation enhancement, without explicitly addressing the structured spectral distortions introduced by motion and illumination. In practice, these disturbances manifest as both out-of-band interference and residual in-band noise that overlap with cardiac rhythms, making it difficult for existing methods to reliably isolate physiological components.
A more comprehensive discussion of related work is provided in \textbf{Appendix}~\ref{appendix_related_work}.

\noindent\textbf{Remark}. Our work revisits the role of frequency information in rPPG modeling.
Existing approaches typically use spectral cues either (\emph{\textcolor{magenta}{i}}) during post-processing for HR estimation~\cite{niu2020video,lu2021dual,yu2022physformer,liu2023efficientphys,qian2024dual}, or (\emph{\textcolor{magenta}{ii}}) as auxiliary training components, such as Fourier blocks~\cite{zou2025rhythmmamba}, spectral augmentations~\cite{yue2023facial}, or frequency-based losses~\cite{yu2023physformer++,sun2024contrast}. 
In contrast, \textcolor{SkyBlue}{\textbf{FreqPhys}} incorporates implicit physiological frequency priors directly into the online denoising process as an internal structural constraint, rather than treating them as external or optional clues. 
Specifically, we introduce a progressive three-stage frequency-guided diffusion denoising framework consisting of (\emph{\textcolor{magenta}{i}}) out-of-band suppression, (\emph{\textcolor{magenta}{ii}}) cardiac harmonic enhancement, and (\emph{\textcolor{magenta}{iii}}) adaptive in-band selection, which is applied at every diffusion step during both training and inference.

\begin{figure*}[t]
\centering
\includegraphics[width=1.0\linewidth]{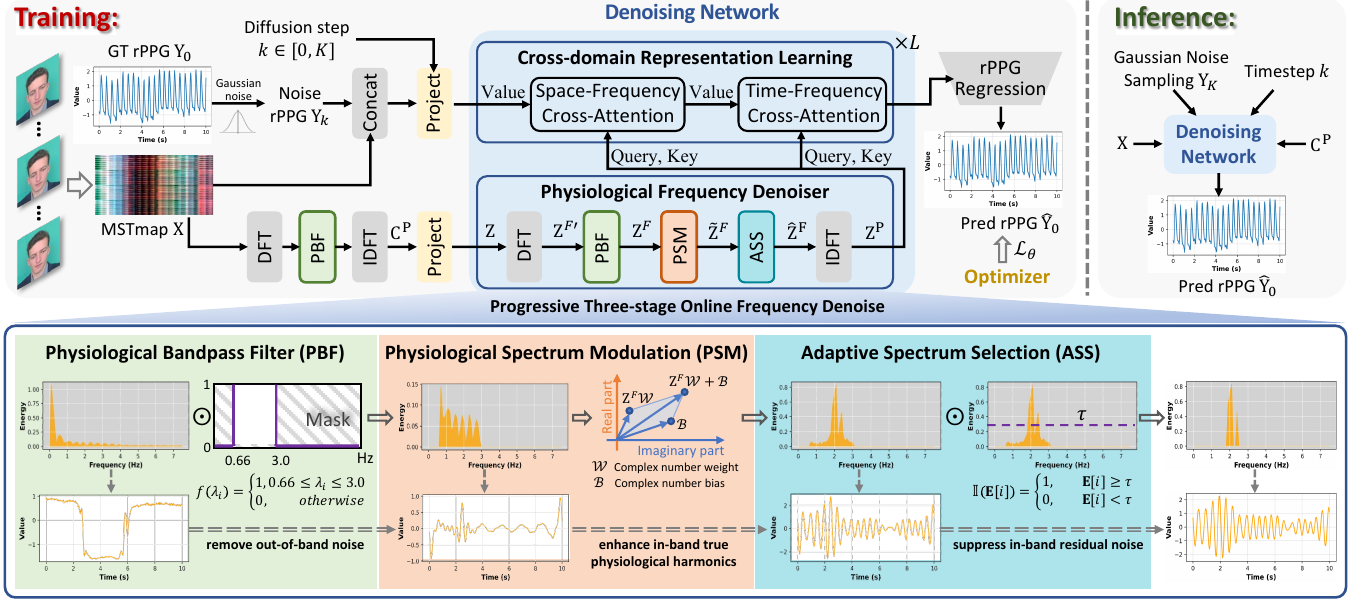}
\caption{\textbf{The pipeline of proposed \textcolor{SkyBlue}{\textbf{FreqPhys}}.}
Given a facial video, we first construct MSTmap $\mathbf{X}$ as the temporal condition and generate the frequency condition $\mathbf{C}^{\mathbf{P}}$ by applying the PBF.
During training, we initially generate noise rPPG $\mathbf{Y}_{k}$ by adding Gaussian noise to Ground Truth rPPG $\mathbf{Y}_{0}$ for the $k$-th step.
Then, we input $\mathbf{Y}_{k}$, $\mathbf{X}$, $k$, and $\mathbf{C}^{\mathbf{P}}$ into the \textit{Denoising Network}.
Specifically, the frequency condition $\mathbf{C}^{\mathbf{P}}$ is fed into the \textit{Physiological Frequency Denoiser} module to enhance physiological spectral clues through three key steps: (\emph{\textcolor{magenta}{i}}) PBF removes out-of-band noise based on the physiological frequency bandwidth [0.66,3.0] Hz; (\emph{\textcolor{magenta}{ii}}) PSM emphasizes valid physiological harmonics by modeling interactions between real and imaginary components;  (\emph{\textcolor{magenta}{iii}}) ASS dynamically suppresses in-band noise using data-driven energy thresholds.
Next, with \textit{Cross-domain Representation Learning}, our \textcolor{SkyBlue}{\textbf{FreqPhys}} includes frequency-domain denoised information into space and time dependencies modeling to estimate the high-fidelity rPPG signal.
During inference, the initial rPPG $\mathbf{Y}_{K}$ is randomly sampled from Gaussian noise, with frequency condition and denoising network processes mirroring those used in training.
}
\label{fig:pipeline}
\end{figure*}

\section{Methodology}
Remote physiological measurement from facial videos can be regarded as a video sequence to signal sequence problem. 
Let $\mathbf{V} \in \mathbb{R}^{T \times 3 \times H \times W}$ denote a raw facial video clip containing $T$ frames with 3 color channels and spatial resolution $H \times W$.
Following established rPPG preprocessing protocols ~\cite{niu2020video,qian2025physdiff}, we extract $N$ facial regions of interest (ROI) through landmark alignment and pixel-level average pooling of $C$ color channels, constructing a multi-scale temporal map (MSTmap) $\mathbf{X} \in \mathbb{R}^{T \times N \times C}$ as the model input. 
The objective is to recover the clean periodic rPPG signal $\mathbf{Y} \in \mathbb{R}^T$ from $\mathbf{X}$, formulated as learning a denoising function $f_\theta: \mathbf{X} \mapsto \mathbf{Y}$, where $\theta$ denotes trainable parameters. 
The overview of our method is illustrated in Fig.~\ref{fig:pipeline}, where the details are described as follows.

\subsection{Physiological Frequency Denoiser}
\textbf{Physiological Bandpass Filter (PBF). }
Inspired by the fact that true cardiac activities mainly fall within a fixed frequency bandwidth, typically [0.66,3.0] Hz~\cite{wang2016algorithmic}, we devise a \textit{Physiological Bandpass Filter} that directly isolates cardiac frequency components in the spectral space.
Specifically, we first project the frequency condition $\mathbf{C}^{\mathbf{P}}\in \mathbb{R}^{T \times N \times C}$ into a $D$-dimensional latent space $\mathbf{Z}\in \mathbb{R}^{T \times N \times D}$ by a linear layer, then transform it to frequency domain via the Discrete Fourier Transform $\mathcal{F}$,
\begin{equation}
    \mathbf{Z}^{F'} = \mathcal{F} (\mathbf{Z}) \in \mathbb{C}^{(\lfloor T/2 \rfloor+1) \times N\times D}.
\end{equation}
Then, the noisy frequency components outside the physiological bandwidth range can be discarded using an ideal bandpass filter:
\begin{equation}
    {\mathbf{Z}}^{F} = \mathbf{Z}^{F'}\odot f(\lambda_i), \quad \text{for } i = 0, \dots, \lfloor T/2\rfloor,
\end{equation}
where $\odot$ denotes the Hadamard product. $\lambda_i$ denotes the physical frequency corresponding to the $i$-th frequency bins and $\lambda_i = if_s/T$ Hz, with $f_s$ denoting the sampling rate. 
$f(\lambda_i)$ is an indicator function, which outputs 1 when $(0.66\leq\lambda_i\leq\;3.0)$ and 0 otherwise.

\begin{wrapfigure}[11]{r}{0.45\linewidth}
\vspace{-2.5ex}
\centering
\includegraphics[width=1.0\linewidth]{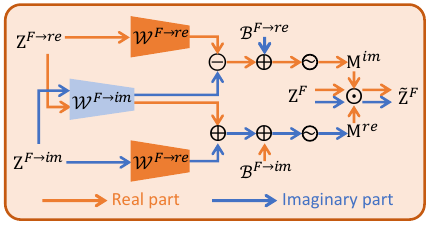}
\vspace{-4ex}
\caption{\textbf{The details of physiological spectrum modulation module}.} 
\label{fig:PSM}
\end{wrapfigure}
\noindent\textbf{Physiological Spectrum Modulation (PSM). } 
While the \textit{Physiological Bandpass Filter} is effective in removing noise outside the physiological frequency band, another challenge still exists where noise components may overlap or closely resemble physiological signals within this band. 
Such overlapping frequencies may severely distort the signal, making it difficult to accurately extract physiological features.
We apply a learnable \textit{Physiological Spectrum Modulation} in the frequency domain to emphasize true physiological harmonics while suppressing non-physiological components.
The detailed implementation process of PSM is shown in Fig.~\ref{fig:PSM}.
Specifically, given the physiological frequency representation $\mathbf{Z}^{F}=\mathbf{Z}^{F \rightarrow re}+j\cdot\mathbf{Z}^{F \rightarrow im} \in\mathbb{C}^{(\lfloor T/2 \rfloor+1)\times N\times D}$, we denote its real and imaginary parts as $\mathbf{Z}^{F \rightarrow re}$ and $\mathbf{Z}^{F \rightarrow im}$, separately.
To achieve more exhaustive spectrum modulation, we encode the real and imaginary parts separately to generate the modulation signals:
\begin{equation}
    \mathbf{M} = \mathbf{M}^{re} + j \cdot\mathbf{M}^{im} = \sigma(\mathbf{Z}^{F} \mathcal{W}^{F} + \mathcal{B}^{F}), \mathbf{M}\in\mathbb{C}^{(\lfloor T/2 \rfloor+1)\times N\times D},
\end{equation}
where $\sigma$ is the ReLU activation function, $\mathcal{W}^{F} = (\mathcal{W}^{F \rightarrow re} + j \cdot \mathcal{W}^{F \rightarrow im}) \in\mathbb{C}^{D\times D}$ is the trainable complex number weight matrix with $\{\mathcal{W}^{F \rightarrow re}, \mathcal{W}^{F \rightarrow im}\} \in\mathbb{R}^{D\times D}$, and $\mathcal{B}^{F} = (\mathcal{B}^{F \rightarrow re} + j \cdot \mathcal{B}^{F \rightarrow im}) \in\mathbb{C}^{D}$ is the trainable complex number biases with $\{\mathcal{B}^{F \rightarrow re},\mathcal{B}^{F \rightarrow im}\} \in\mathbb{R}^{D}$. 
According to the rule of multiplication of complex numbers (details can be seen in \textbf{Appendix}~\ref{complex_multiplication}), we further unfold into real and imaginary parts as follows:
\begin{equation}
    \begin{aligned}
        &\mathbf{M}^{re} = \sigma(\mathbf{Z}^{F \rightarrow re} \mathcal{W}^{F \rightarrow re} - \mathbf{Z}^{F \rightarrow im} \mathcal{W}^{F \rightarrow im} + \mathcal{B}^{F \rightarrow re}), \\
        &\mathbf{M}^{im} = \sigma(\mathbf{Z}^{F \rightarrow re} \mathcal{W}^{F \rightarrow im} + \mathbf{Z}^{F \rightarrow im} \mathcal{W}^{F \rightarrow re}+ \mathcal{B}^{F \rightarrow im}). \\
    \end{aligned}
\end{equation}
Afterwards, the generated complex signal is used to modulate counterparts of the original frequency-domain feature, which can be written as,
\begin{equation}\label{eq:circular_convolution}
    \mathbf{\tilde{Z}}^{F} = \mathbf{M}\odot\mathbf{Z}^{F} \in \mathbb{C}^{(\lfloor T/2\rfloor+1)\times D}.
\end{equation}
By Theorem~\ref{theorem_1}, this spectral multiplication operation in Equation~\ref{eq:circular_convolution} is mathematically equivalent to a global circular convolution in time, endowing each sequence with a content-adaptive receptive field that is ideal for capturing periodic cardiac rhythms.  
By means of Equation~\ref{eq:circular_convolution}, the physiological frequency components can be effectively enhanced via direct spectral modulation.
The detailed proof of \textit{theorem}~\ref{theorem_1} is illustrated in \textbf{Appendix}~\ref{theorem_1_proof}.
\begin{theorem}\label{theorem_1}
    (Frequency-domain Convolution Theorem)  The multiplication of two signals in the frequency domain is equivalent to the frequency transformation of a circular convolution of these two signals in the temporal domain, which can be summarized as:
    \begin{equation}
        \mathcal{F} [\mathbf{M}(v)\otimes\mathbf{Z}(v)] = \mathcal{F}(\mathbf{M}(v))\odot\mathcal{F}(\mathbf{Z}(v)),
    \end{equation}
    where $\otimes$ and $\odot$ represent circular convolutional operation and element multiplication operation, respectively, $\mathbf{M}(v)$ and $\mathbf{Z}(v)$ represent two signals for the time variable $v$, and $\mathcal{F}(\cdot)$ denotes the Discrete Fourier Transform.
\end{theorem}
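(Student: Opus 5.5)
The plan is to prove the identity directly from the definitions of the length-$T$ DFT and of circular convolution. Throughout, the time index $v$ is read modulo $T$. The definitions are
\begin{equation*}
(\mathbf{M}\otimes\mathbf{Z})(v)=\sum_{u=0}^{T-1}\mathbf{M}(u)\,\mathbf{Z}\bigl((v-u)\bmod T\bigr),\qquad \mathcal{F}(\mathbf{Z})(i)=\sum_{v=0}^{T-1}\mathbf{Z}(v)\,e^{-j2\pi iv/T}.
\end{equation*}
For multichannel signals of shape $T\times N\times D$, both operations act independently along the temporal axis, channel by channel. It therefore suffices to treat scalar sequences, and the channel-wise statement then follows entrywise. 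This is consistent with the Hadamard product $\odot$ used in Equation~\ref{eq:circular_convolution}.

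\textbf{Key computation.} First substitute the definition of circular convolution into the DFT at frequency bin $i$. This gives the double sum
\begin{equation*}
\sum_{v}\sum_{u}\mathbf{M}(u)\,\mathbf{Z}(v-u)\,e^{-j2\pi iv/T}.
\end{equation*}
Since both sums are finite, the order of summation can be exchanged. Next, split the exponential as $e^{-j2\pi iu/T}\,e^{-j2\pi i(v-u)/T}$. For each fixed $u$, reindex the inner sum by $w=(v-u)\bmod T$.

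\textbf{Main obstacle.} The step needing care is justifying this reindexing. Two facts are required. First, the map $v\mapsto (v-u)\bmod T$ is a bijection of $\{0,\dots,T-1\}$. Second, the kernel satisfies $e^{-j2\pi i(v-u)/T}=e^{-j2\pi iw/T}$, because $e^{-j2\pi i\cdot/T}$ is $T$-periodic for integer $i$. This periodicity is exactly why the result holds for circular, and not linear, convolution. I would state it as a one-line lemma.

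\textbf{Conclusion.} Once the reindexing is justified, the inner sum equals $\mathcal{F}(\mathbf{Z})(i)$ and no longer depends on $u$. The outer sum then factors out as $\mathcal{F}(\mathbf{M})(i)$, which yields the claimed identity for every $i$.

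\textbf{Remark on the real FFT.} I would add that the result is unaffected by restricting to the bins $i=0,\dots,\lfloor T/2\rfloor$ returned by the real FFT. For real inputs, the remaining bins are determined by Hermitian symmetry, so the identity on the retained bins suffices.

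\textbf{Converse reading.} Applying $\mathcal{F}^{-1}$ to both sides gives the form used in the text: a pointwise product in the spectrum corresponds to a global circular convolution in time. Here $\mathcal{F}^{-1}$ is the inverse DFT with the usual $1/T$ normalization, and applying it is valid because the DFT is invertible.
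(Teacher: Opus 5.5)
Your proposal is correct and follows essentially the same route as the paper's proof. Both substitute the circular convolution into the DFT, exchange the two finite sums, reindex the inner sum via $r=(v-u)\bmod T$, and factor into $\mathcal{F}(\mathbf{M})\odot\mathcal{F}(\mathbf{Z})$; your explicit justification of the reindexing (the map $v\mapsto(v-u)\bmod T$ is a bijection and $e^{-j2\pi i v/T}$ is $T$-periodic) just makes precise the step the paper leaves implicit when it writes the kernel as $e^{-j2\pi i(r+u)/T}$.
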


\noindent\textbf{Adaptive Spectrum Selection (ASS). }  
Although the physiological spectrum modulation block enhances the cardiac frequency band, residual noise components whose frequencies lie close to the physiological range may still persist. 
To further isolate the true pulse periodicity, we introduce an \emph{Adaptive Spectrum Selection} (ASS) module that performs learnable frequency-domain filtering via a data-driven threshold. 
Given the modulated spectrum $\widetilde{\mathbf{Z}}^F$, we first compute its per-frequency energy:
\begin{equation}
\begin{split}
    \mathbf{E}[i] = \sqrt{(\widetilde{\mathbf{Z}}^{F \rightarrow re}[i])^2
    + (\widetilde{\mathbf{Z}}^{F \rightarrow im}[i])^2} 
    = \left\|\widetilde{\mathbf{Z}}^F[i]\right\|_2, 
    \quad i = 0, 1, \dots, \lfloor T/2 \rfloor.
\end{split}
\end{equation}
We then introduce a learnable threshold $\tau$ to adaptively distinguish dominant cardiac frequencies from residual noise. 
A binary selection mask is defined as:
\begin{equation}
m[i] = \mathbb{I}(\mathbf{E}[i] \ge \tau),
\end{equation}
where $\mathbb{I}(\cdot)$ denotes the indicator function. 
The filtered spectrum is computed as:
\begin{equation}
\hat{\mathbf{Z}}^F[i] = \widetilde{\mathbf{Z}}^F[i] \odot m[i].
\end{equation}
Importantly, the hard selection mask is trained using the Straight-Through Estimator (STE) strategy described previously, allowing the adaptive threshold $\tau$ to be jointly optimized with the entire network.
Finally, the selected spectrum is transformed back to the time domain via inverse DFT:
\begin{equation}
\mathbf{Z}^{\mathbf{P}} = \mathcal{F}^{-1}(\hat{\mathbf{Z}}^F),
\end{equation}
yielding a pulse signal dominated by cardiac frequency components.

\subsection{Cross-domain Representation Learning}
Time domain modeling focuses on local dependencies and transient behaviors, while frequency domain analysis provides insights into the global correlations and periodicity of the data. 
Therefore, combining these two domains is a promising approach to recover high-fidelity rPPG signals.
To effectively integrate intermediate representations $\mathbf{Z}$ in the time domain with frequency-domain priors $\mathbf{Z}^{\mathbf{P}}$, we propose a cross-domain representation learning module.
Specifically, we perform $L$ alternating cross-attention layers that can progressively learn the various input domains for representation learning.
In each layer $l$, the initial physiological frequency representation is first obtained by applying PBF, followed by PSM and ASS modules.
Next, the physiological frequency representation $\mathbf{Z}^{\mathbf{P}}$ and intermediate representations $\mathbf{Z}$ are integrated by cross-attention across the spatial and temporal axis, respectively. 
Formally, this process can be formulated as follows:
\begin{equation}
    \begin{aligned}
        &\mathbf{Z}^{\mathbf{P},(l)}= \mathrm{ASS}(\mathrm{PSM}(\mathrm{PBF}(\mathbf{Z}^{(l)}))),\\
        &\mathbf{Z}^{(l)'}= \mathrm{CA}(\mathbf{Z}^{\mathbf{P},(l)}, \mathbf{Z}^{(l)})+\mathbf{Z}^{(l)},\\
        &\mathbf{Z}^{(l+1)}= \mathrm{CA}(\mathbf{Z}^{\mathbf{P},(l)}, \mathbf{Z}^{(l)'})+\mathbf{Z}^{(l)'},\\
    \end{aligned}
\end{equation}
where $\mathrm{CA}(a,b)$ refers to Cross-Attention, with $a$ denotes query and key, and $b$ denotes value.

\begin{figure*}[t]
\centering
\includegraphics[width=1.0\linewidth]{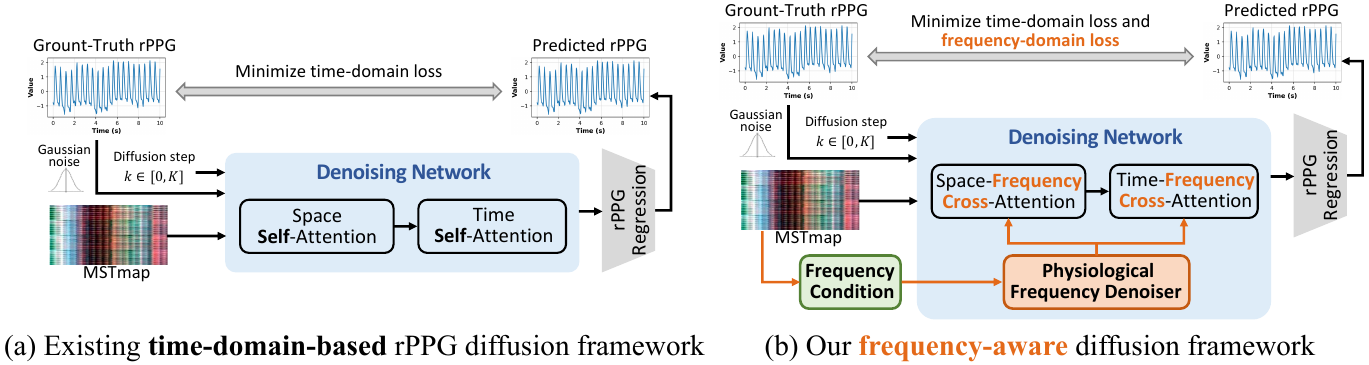}
\caption{\textbf{Architecture comparison with existing diffusion methods}.} 
\label{fig:diffusion_arch}
\end{figure*}

\subsection{Frequency‐aware Diffusion Model}\label{sec:3.3}
Recently, denoising diffusion probabilistic models (DDPMs)~\cite{ho2020denoising} have emerged as powerful generative frameworks that progressively refine noisy inputs through learned reverse Markov chains, capturing complex data distributions.
Inspired by this, some diffusion-based rPPG methods~\cite{qian2025physdiff,chen2024diffphys} for rPPG estimation have been proposed and achieved SOTA performance.
They treat the rPPG estimation task as calculating the conditional rPPG signal probability distribution $q(\mathbf{Y}_{0}|\mathbf{C})$, where $q(\mathbf{Y}_{0})$ is the clean rPPG distribution, and the condition $\mathbf{C}$ for probability distribution calculation is generally the input $\mathbf{X}$ in the time domain.
However, these diffusion models mainly focus on time-domain conditioning and overlook the unique spectral prior. 
To alleviate this limitation, we introduce a novel frequency-aware diffusion model that explicitly incorporates physiological frequency priors to guide the generation of high-fidelity rPPG signals, as shown in Fig.~\ref{fig:diffusion_arch}. 
Specifically, our frequency-aware diffusion model fuses with physiological frequency condition $\mathbf{C}^{\mathbf{P}}$ to learn the conditional rPPG distribution $q(\mathbf{Y}_0 | \mathbf{Y}, \mathbf{X}, \mathbf{C}^{\mathbf{P}})$, through two Markov chain processes of diffusion step $K$, i.e., the forward process and the reverse process.

\noindent\textbf{Forward Process.}
The forward process $q$ incrementally adds Gaussian noise to the ground truth rPPG signal $\mathbf{Y}_0\in \mathbb{R}^{T}$ via a fixed Markov chain $\mathbf{Y}_0,\dots \mathbf{Y}_K$ as follows:
\begin{equation}\label{eq:process_q}
\begin{split}
    q(\mathbf{Y}_{1:K} | \mathbf{Y}_0) &= \prod_{k=1}^K q(\mathbf{Y}_k | \mathbf{Y}_{k-1}), \\
    q(\mathbf{Y}_k | \mathbf{Y}_{k-1}) &= \mathcal{N}(\mathbf{Y}_{k}; \sqrt{1-\beta_k}\mathbf{Y}_{k-1}, \beta_k \mathbf{I}),
\end{split}
\end{equation}
where $\beta_k$ is a noise schedule, satisfying $\beta_k<\beta_{k-1}$.
As $K$ becomes large, $\mathbf{Y}_K \approx \mathcal{N}(0,\mathbf{I})$.
Following DDPM~\cite{ho2020denoising}, we sample $\mathbf{Y}_k$ from $\mathbf{Y}_0$ at any time step $k$ in a closed form:
\begin{equation}\label{eq:process_closed_form}
    q(\mathbf{Y}_k | \mathbf{Y}_{0}) = \mathcal{N}(\mathbf{Y}_k;\sqrt{\bar{\alpha}_k} \mathbf{Y}_{0}, (1-\bar{\alpha}_k) \mathbf{I}),
\end{equation}
where $\alpha_k = 1 - \beta_k$ and $\bar{\alpha}_k = \prod_{s=0}^k \alpha_s$.
Utilizing the parameterization trick~\cite{kingma2013auto}, we express $\mathbf{Y}_k$ as:
\begin{equation} \label{eq:Y_k}
    \mathbf{Y}_k = \sqrt{\bar{\alpha}_k} \mathbf{Y}_0 + \sqrt{1 - \bar{\alpha}_k} \epsilon,
\end{equation}
where $\epsilon \sim \mathcal{N}(0, \mathbf{I})$.  
The detailed derivations are provided in \textbf{Appendix}~\ref{detail_Y_k}.

\noindent\textbf{Reverse Process.}
The reverse process aims to estimate the posterior $q(\mathbf{Y}_{k-1}|\mathbf{Y}_k)$.
Different from PhysDiff~\cite{qian2025physdiff}, in our frequency-aware diffusion model, this distribution is approximated by a neural network $f_\theta$ conditioned on both the time-domain condition $\mathbf{X}$ and the physiological frequency condition $\mathbf{C}^{\mathbf{P}}$:
\begin{equation}\label{eq:reverse}
    p_{\theta}(\mathbf{Y}_{k-1}|\mathbf{Y}_k, \mathbf{X}, \mathbf{C}^{\mathbf{P}}) = \mathcal{N}(\mathbf{Y}_{k-1};\mu_\theta(\mathbf{Y}_k, \mathbf{X}, \mathbf{C}^{\mathbf{P}},k), \Sigma_{\theta}).
\end{equation}
Next, we show that incorporating the physiological frequency prior $\mathbf{C}^{\mathbf{P}}$ can effectively reduce the uncertainty in the reverse diffusion process, leading to more accurate rPPG signal reconstruction.

\noindent\textbf{Training.}
Traditional DDPM-based training involves learning to predict the added Gaussian noise at each diffusion step, which can be inefficient~\cite{ho2020denoising}.
Instead, our denoising network $f_\theta$ is designed to directly reconstruct the clean rPPG signal $\mathbf{Y}_0$ from the noisy input $\mathbf{Y}_0$, conditioned on $\mathbf{X}$, $\mathbf{C}^{\mathbf{P}}$, and the timestep $k$:
\begin{equation}
\hat{\mathbf{Y}}_0=f_\theta(\mathbf{Y}_k,\mathbf{X},\mathbf{C}^{\mathbf{P}},k).
\end{equation}
In practice, for Equation~\ref{eq:reverse}, the mean $\mu_\theta$ and covariance $\sigma_{k}^2$ in reverse process are parameterized as $\mu_\theta(\mathbf{Y}_k, \mathbf{X}, \mathbf{C}^{\mathbf{P}},k)$ and $\Sigma_{\theta}$ (details are presented in \textbf{Appendix}~\ref{appendix:reverse_parameters}).
Furthermore, inspired by the Fourier-based loss term, which is beneficial for the accurate reconstruction of the signals\cite{fons2022hypertime}, we propose to guide the diffusion training by applying it to the frequency domain with the Fourier transform.
Formally, our training objective integrates both time and frequency-domain constraints:
\begin{equation}
\begin{split}
    \mathcal{L}_{\theta}(\hat{\mathbf{Y}}_0,\mathbf{Y}_0)
    = \underbrace{1 - Pearson(\hat{\mathbf{Y}}_0,\mathbf{Y}_0)}_{\text{time-domain loss}} 
    \quad + \underbrace{MSE(\mathcal{F}(\hat{\mathbf{Y}}_0), \mathcal{F}(\mathbf{Y}_0))}_{\text{frequency-domain loss}},
\end{split}
\end{equation}
where $Pearson$ represents Pearson correlation coefficient, $MSE$ denotes Mean Square Error, and $\mathcal{F}$ denotes the Discrete Fourier Transform.
For inference, we start from $\mathbf{Y}_K \sim \mathcal{N}(0, \mathbf{I})$, $K$, $\mathbf{X}$, and $\mathbf{C}^{\mathbf{P}}$.
Then, we follow DDIM~\cite{songdenoising,qian2025physdiff} and perform the reverse process to obtain the final rPPG signal.

\section{Experiments}
\subsection{Experimental Setup}
\textbf{Datasets and Metrics.} We evaluate our method using both intra-dataset and cross-dataset protocols on six widely used rPPG benchmark datasets, including UBFC-rPPG~\cite{bobbia2019unsupervised}, PURE~\cite{stricker2014non}, MMPD~\cite{tang2023mmpd}, BUAA~\cite{xi2020image}, VIPL-HR~\cite{niu2019rhythmnet}, and MR-NIRP-Car~\cite{nowara2020near}. 
The detailed description of datasets and evaluation metrics is provided in \textbf{Appendix}~\ref{details_datasets} and \textbf{Appendix}~\ref{metric}, respectively.

\noindent\textbf{Implementation Details.}
The proposed denoising network consists of $L=4$ layers with a feature dimension of $D=128$.
The diffusion step is set to $K=1000$, following PhysDiff~\cite{qian2025physdiff}.
During training, the model is optimized using Adam for 50 epochs with an initial learning rate of $1\times10^{-3}$, and all experiments are conducted on four NVIDIA RTX 4090 GPUs (24GB).

\begin{table*}[t]
\caption{\textbf{Intra-dataset HR estimation results.} Models are trained and tested on the UBFC-rPPG, PURE, VIPL-HR, and MMPD datasets. {\textbf{Bold}}: best results.}
\vspace{-2.5ex}
\label{tab:intra-dataset}
\centering
\renewcommand\arraystretch{1.0}
\tabcolsep 1pt
\resizebox{1.0\linewidth}{!}{
\begin{tabular}{cc|ccc|ccc|ccc|ccc|ccc|ccc}
\toprule\toprule
\rowcolor[HTML]{f8f9fa} & & \multicolumn{3}{c|}{\textbf{UBFC-rPPG}} & \multicolumn{3}{c|}{\textbf{PURE}} & \multicolumn{3}{c|}{\textbf{BUAA}} & \multicolumn{3}{c|}{\textbf{MMPD}} & \multicolumn{3}{c|}{\textbf{VIPL-HR}} & \multicolumn{3}{c}{\textbf{MR-NIRP-Car}}\\ 
\rowcolor[HTML]{f8f9fa} \multirow{-2}{*}{\textbf{Method}} & \multirow{-2}{*}{\textbf{Venue}} & MAE$\downarrow$ & RMSE$\downarrow$ & $r\uparrow$ & MAE$\downarrow$ & RMSE$\downarrow$ & $r\uparrow$ & MAE$\downarrow$ & RMSE$\downarrow$ & $r\uparrow$ & MAE$\downarrow$ & RMSE$\downarrow$ & $r\uparrow$ & MAE$\downarrow$ & RMSE$\downarrow$ & $r\uparrow$ & MAE$\downarrow$ & RMSE$\downarrow$ & $r\uparrow$\\ 
\midrule
\multicolumn{20}{l}{\emph{\textcolor{gray}{Traditional rPPG Methods}}}\\
GREEN~\cite{verkruysse2008remote} & Opt Express'08 & 7.50 & 14.41  & 0.62 & 7.23 & 17.05 & 0.69 & 6.89 & 10.39 & 0.60 & 21.68  & 27.69 & -0.01 & - & - & - & 11.88 & 14.46 & 0.17\\
ICA~\cite{poh2010non} & Opt Express'10 & 5.17 & 11.76 & 0.65 & 3.76 & 12.60 & 0.85 & - & - & - & 18.60 & 24.30 & 0.01 & - & - & - & 10.87 & 13.62 & 0.02\\
CHROM~\cite{de2013robust} & TBE'13 & 2.37 & 4.91 & 0.89 & 2.07 & 2.50 & 0.99 & - & - & - & 13.66 & 18.76 & 0.08 & 11.40 & 16.90 & 0.28 & 7.16 & 9.09 & 0.35\\
\midrule
\multicolumn{20}{l}{\emph{\textcolor{gray}{Deep Learning-based rPPG Methods}}}\\
DeepPhys~\cite{chen2018deepphys} & ECCV'18 & 6.27 & 10.82 & 0.65 & 0.83 & 1.54 & {{0.99}} & - & - & - & 22.27 & 28.92 & -0.03 & 11.0 & 13.8 & 0.72 & 11.98 & 14.18 & 0.05\\
PhysNet~\cite{yu2019remote1} & BMVC'19 & 2.95 & 3.67 & 0.97 & 2.10 & 2.60 & {{0.99}} & 10.89 & 11.70 & -0.04 & 4.80 & 11.80 & 0.60 & 10.80 & 14.80 & 0.20 & 11.18 & 13.53 & 0.17\\
CVD~\cite{niu2020video} & ECCV'20 & 2.19 & 3.12 & {{0.99}} & 1.29 & 2.01 & 0.98 & - & - & - & - & - & - & 5.02 & 7.97 & 0.79 & - & - & -\\
TS-CAN~\cite{liu2020multi} & NeurIPS'20 & 1.70 & 2.72 & {{0.99}} & 2.48 & 9.01 & 0.92 & - & - & - & 9.71 & 17.22 & 0.44 & - & - & - & 13.62 & 15.46 & 0.19\\
Gideon et al.\cite{gideon2021way} & ICCV'21 & 1.85 & 4.28 & 0.93 & 2.30 & 2.90 & {{0.99}} & - & - & - & - & - & - & 9.01 & 14.02 & 0.58 & - & - & -\\
Dual-GAN~\cite{lu2021dual} & CVPR'21 & 0.44 & 0.67 & {{0.99}} & 0.82 & 1.31 & {{0.99}} & - & - & - & - & - & - & 4.93 & 7.68 & 0.81 & - & - & - \\
PhysFormer~\cite{yu2022physformer} & CVPR'22 & 0.50 & 0.71 & {{0.99}} & 1.10 & 1.75 & {{0.99}} & 8.46 & 10.17 & -0.06 & 11.99 & 18.41 & 0.18 & 4.97 & 7.79 & 0.78 & 10.17 & 12.57 & -0.20\\ 
Contrast-Phys~\cite{sun2022contrast} & ECCV'22 & 0.64 & 1.00 & {{0.99}} & 1.00 & 1.40 & {{0.99}} & - & - & - & - & - & - & 32.1 & 36.1 & 0.04 & - & - & -\\
EfficientPhys~\cite{liu2023efficientphys} & WACV'23 & 1.14 & 1.81 & {{0.99}} & 4.75 & 9.39 & 0.99 & 16.09 & 16.80 & 0.14 & 13.47 & 21.32 & 0.21 & - & - & - & 18.36 & 20.01 & -0.04\\
Li et al.\cite{li2023contactless} & ICCV'23 & 0.48 & {0.64} & {\textbf{1.00}} & 0.64 & 1.16 & {{0.99}} & - & - & - & - & - & - & 4.97 & 7.79 & 0.78 & - & - & -\\
Yue et al.\cite{yue2023facial} & TPAMI'23 & 0.58 & 0.94 & {{0.99}} & 1.23 & 2.01 & {{0.99}} & - & - & - & - & - & -  & - & - & -\\
Contrast-Phys+\cite{sun2024contrast} & TPAMI'24 & {{0.21}} & 0.80 & {{0.99}} & {0.48} & {0.98} & {{0.99}} & - & - & - & - & - & -  & - & - & - & - & - & -\\
CodePhys~\cite{chu2025codephys} & JBHI'25 & 0.21 & 0.26 & 0.99 & 0.39 & 0.83 & 0.99 & - & - & - & - & - & - & 4.27 & 7.11 & 0.81 & - & - & -\\
RhythmMamba~\cite{zou2025rhythmmamba} & AAAI'25 & 0.50 & 0.75 & {{0.99}} & {{0.23}} & {{0.34}} & {{0.99}} & 11.99 & 14.22 & 0.16 & {\textbf{3.16}} & {{7.27}} & {{0.84}} & 4.30 & 7.49 & 0.81 & 8.07 & 10.03 & 0.24\\
PhysDiff~\cite{qian2025physdiff} & AAAI'25 & {0.33} & {{0.57}} & {\textbf{1.00}} & {0.29} & {0.54} & {\textbf{1.00}} & - & - & - & 7.17 & 9.63 & 0.71 & {{3.92}} & {{6.65}} & {{0.85}} & 7.58 & 8.58 & -0.14\\
PhysLLM~\cite{xie2025physllm} & ICLR'26 & {0.21} & 0.57 & 0.99 & {0.17} & 0.35 & 0.99 & 6.48 & 8.48 & 0.63 & 4.36 & 10.76 & 0.65 & 4.24 & 6.81 & - & - & - & - \\
PHASE-Net~\cite{zhao2025phase} & CVPR'26 & \textbf{0.15} & 0.53 & 0.99 & \textbf{0.14} & 0.35 & 0.99 & 5.89 & 7.89 & 0.48 & 4.78 & 8.22 & 0.71 & - & - & - & - & - & - \\
\midrule
\rowcolor[HTML]{EBFEE8} \textcolor{SkyBlue}{\textbf{FreqPhys}} \textbf{(Ours)} & - & {0.17} & {\textbf{0.41}} & {\textbf{1.00}} & {{0.17}} & {\textbf{0.25}} & {\textbf{1.00}} & \textbf{1.13} & \textbf{1.31} & \textbf{0.99} & {{4.20}} & {\textbf{6.78}} & {\textbf{0.86}} & {\textbf{3.79}} & {\textbf{6.34}} & {\textbf{0.86}} & \textbf{5.75} & \textbf{6.08} & \textbf{0.54}\\
\bottomrule\bottomrule
\end{tabular}}
\end{table*}

\begin{table*}[t]
\caption{\textbf{HRV and RF estimation results.} Models are trained and tested on the UBFC-rPPG dataset. LF, HF, and RF represent low frequency, high frequency, and respiration frequency, respectively. ``n.u.'' denotes normalized units.}
\vspace{-2.5ex}
\renewcommand\arraystretch{1.0}
\tabcolsep 7pt
\centering
\label{tab:hrv testing}
\resizebox{1.0\linewidth}{!}{
\begin{tabular}{cc|ccc|ccc|ccc|ccc}
\toprule\toprule
\rowcolor[HTML]{f8f9fa} & &  \multicolumn{3}{c|}{\textbf{LF (n.u.)}} &  \multicolumn{3}{c|}{\textbf{HF (n.u)}} &  \multicolumn{3}{c|}{\textbf{LF/HF}} &  \multicolumn{3}{c}{\textbf{RF (Hz)}}\\
\rowcolor[HTML]{f8f9fa} \multirow{-2}{*}{\textbf{Method}} & \multirow{-2}{*}{\textbf{Venue}} &  \multicolumn{1}{c}{SD$\downarrow$} & RMSE$\downarrow$ & \multicolumn{1}{c|}{$r\uparrow$}  &  \multicolumn{1}{c}{SD$\downarrow$} & RMSE$\downarrow$ & \multicolumn{1}{c|}{$r\uparrow$}  & \multicolumn{1}{c}{SD$\downarrow$} & RMSE$\downarrow$ & \multicolumn{1}{c|}{$r\uparrow$}  &  \multicolumn{1}{c}{SD$\downarrow$} & RMSE$\downarrow$ & \multicolumn{1}{c}{$r\uparrow$}\\ \midrule
\multicolumn{14}{l}{\emph{\textcolor{gray}{Traditional rPPG Methods}}}\\
GREEN~\cite{verkruysse2008remote} & Opt Express'08 & 0.186 & 0.186  & 0.280  & 0.186  & 0.186 & 0.280 & 0.361  & 0.365 & 0.492 & 0.087  & 0.086 & 0.111\\
ICA~\cite{poh2010non} & Opt Express'10 &0.243	&0.240	&0.159	&0.243	&0.240	&0.159	&0.655	&0.645	&0.226	&0.086&	0.089 &0.102\\
POS~\cite{wang2016algorithmic} & TBE'16 &0.171	&0.169	&0.479	&0.171	&0.169	&0.479	&0.405	&0.399	&0.518	&0.109	&0.107	& 0.087\\ \midrule
\multicolumn{14}{l}{\emph{\textcolor{gray}{Deep Learning-based rPPG Methods}}}\\
CVD~\cite{niu2020video} & ECCV'20   &0.053 & 0.056 & 0.740 & 0.053 & 0.065 & 0.740 & 0.169 & 0.168 & 0.812 & 0.017 & 0.018 & 0.252\\
Dual-GAN~\cite{lu2021dual} & CVPR'21 &{0.034} & {0.035} & {0.891} & {0.034} & {0.034} & {0.891} & {0.131} & 0.136 & {0.881}& 0.010 & 0.010 & {0.395}\\
Gideon \etal ~\cite{gideon2021way} & ICCV'21 & 0.091 & 0.139 & 0.694 & 0.091 & 0.139 & 0.694 & 0.525 & 0.691 & 0.684 & 0.061 & 0.098 & 0.103\\
Contras-Phys~\cite{sun2022contrast} & ECCV'22  & 0.050 & 0.098 & 0.798 & 0.050 & 0.098 & 0.798 & 0.205 & 0.395 & 0.782 & 0.055 & 0.083 & 0.347\\
Contrast-Phys+ ~\cite{sun2024contrast}  & TPAMI'24 & {{0.025}} & 0.025 & 0.947 & 0.025 & 0.025 & 0.947 &{\textbf{0.064}} & {\textbf{0.066}} & 0.963 &0.029 & 0.029 & 0.803\\
PhysDiff~\cite{qian2025physdiff} & AAAI'25 & 0.029 & {{0.022}} & {{0.978}} & {{0.016}} & {{0.022}} & {{0.978}} & 0.079 & {{\textbf{0.066}}} & {{{0.979}}} & {{\textbf{0.006}}} & {{{0.007}}} & {{{0.811}}}\\ \midrule
\rowcolor[HTML]{EBFEE8} \textcolor{SkyBlue}{\textbf{FreqPhys}} \textbf{(Ours)} & - & {\textbf{0.016}} & {{\textbf{0.014}}} & {{\textbf{0.988}}} & {{\textbf{0.013}}} & {{\textbf{0.014}}} & {{\textbf{0.988}}} & {{0.079}} & {{\textbf{0.066}}} & {{\textbf{0.989}}} & {{\textbf{0.006}}} & {{\textbf{0.005}}} & {{\textbf{0.845}}}\\
\bottomrule\bottomrule
\end{tabular}}
\end{table*}

\subsection{Intra-dataset Evaluation}
Tab.~\ref{tab:intra-dataset} reports the quantitative comparison with state-of-the-art methods. 
Notably, the evaluated datasets exhibit progressively increasing levels of difficulty. 
UBFC-rPPG and PURE are relatively controlled environments, whereas BUAA introduces extreme illumination variations. 
MMPD combines illumination changes with head motion and skin-tone diversity. 
The VIPL-HR dataset further increases the diversity of camera equipment and is currently the most complex dataset collected in laboratory settings.
Among them, MR-NIRP-Car represents the most challenging real-world scenario, as it is collected in a natural driving environment with substantial motion, illumination fluctuation, and environmental interference.
Despite these challenges, our method consistently achieves superior performance, with particularly large improvements on the more challenging complex datasets. 
These results demonstrate that incorporating physiological frequency priors enables our model to effectively suppress noise induced by illumination changes, motion artifacts, and cross-subject appearance variations.

In addition to HR estimation, we follow~\cite{niu2020video,lu2021dual,sun2022contrast,sun2024contrast,qian2025physdiff} and further evaluate our method on two other critical physiological indicators, i.e., HRV and RF, both of which require high-quality rPPG signals for reliable peak detection and temporal analysis.
As shown in Tab.~\ref{tab:hrv testing}, our method significantly outperforms existing methods across most metrics.

\begin{table*}[t]
\caption{\textbf{Cross-dataset HR estimation results.} Models are trained on PURE/UBFC-rPPG, and tested on UBFC-rPPG/PURE/MMPD.}
\vspace{-2.5ex}
\label{tab:cross-dataset}
\centering
\renewcommand\arraystretch{1.0}
\tabcolsep 7pt
\resizebox{1.0\linewidth}{!}{
\begin{tabular}{cc|ccc|ccc|ccc|ccc}
\toprule\toprule
\rowcolor[HTML]{f8f9fa} & & \multicolumn{3}{c|}{\textbf{PURE $\rightarrow$ UBFC-rPPG}} & \multicolumn{3}{c|}{\textbf{UBFC-rPPG $\rightarrow$ PURE}} & \multicolumn{3}{c|}{\textbf{PURE $\rightarrow$ MMPD}} & \multicolumn{3}{c}{\textbf{UBFC-rPPG $\rightarrow$ MMPD}}\\ 
\rowcolor[HTML]{f8f9fa} \multirow{-2}{*}{\textbf{Method}} & \multirow{-2}{*}{\textbf{Venue}} & MAE$\downarrow$ & RMSE$\downarrow$ & $r\uparrow$ & MAE$\downarrow$ & RMSE$\downarrow$ & $r\uparrow$ & MAE$\downarrow$ & RMSE$\downarrow$ & $r\uparrow$ & MAE$\downarrow$ & RMSE$\downarrow$ & $r\uparrow$ \\ \midrule
\multicolumn{14}{l}{\emph{\textcolor{gray}{Traditional rPPG Methods}}}\\
GREEN~\cite{verkruysse2008remote} & Opt Express'08 & 19.73 & 21.00  & 0.37  & 10.09  & 23.85 & 0.34 & 21.68  & 27.69 & -0.01 & 21.68  & 27.69 & -0.01\\
ICA~\cite{poh2010non} & Opt Express'10 & 16.00	& 25.65	& 0.44	&4.77 & 16.07 & 0.72 & 18.06 & 24.30 & 0.01	& 18.06 & 24.30 & 0.01\\
CHROM~\cite{de2013robust} & TBE'13 & 4.06 & 8.83 & 0.89	& 5.77	& 14.93	& 0.81	& 13.66	& 18.76 & 0.08 &13.66	& 18.76 & 0.08\\ \midrule
\multicolumn{14}{l}{\emph{\textcolor{gray}{Deep Learning-based rPPG Methods}}}\\
DeepPhys~\cite{chen2018deepphys} & ECCV'18 & 1.21 & 2.90 & {{0.99}} & 5.54 & 18.51 & 0.66 & 16.92 & 24.61 & 0.05 & 17.50 & 25.00 & 0.05 \\
PhysNet~\cite{yu2019remote1} & BMVC'19 & 1.63 & 3.79 & 0.98 & 9.36 & 20.63 & 0.62 & 13.22 & 19.61 & 0.23 & {{10.24}} & 16.54 & 0.29 \\
TS-CAN~\cite{liu2020multi} & NeurIPS'20 & 1.30 & 2.87 & {{0.99}} & 3.69 & 13.80 & 0.82 & 13.94 & 21.61 & 0.20 & 14.01 & 21.04 & 0.24 \\
PhysFormer~\cite{yu2022physformer} & CVPR'22 & 1.44 & 3.77 & 0.98 & 12.92 & 24.36 & 0.47 & 14.57 & 20.71 & 0.15 & 12.10 & 17.79 & 0.17\\ 
EfficientPhys~\cite{liu2023efficientphys} & WACV'23 & 2.13 & 3.00 & {{0.99}} & 5.47 & 17.04 & 0.71 & 14.03 & 21.62 & 0.17 & 13.78 & 22.25 & 0.09\\
RhythmMamba~\cite{zou2025rhythmmamba} & AAAI'25 & 0.95 & 1.83 & {{0.99}} & {{1.98}} & {{6.51}} & {{0.96}} & {{10.44}} & 16.70 & {{0.36}} & 10.63 & 17.14 & {{0.34}}\\
\rowcolor[HTML]{EBFEE8} \textcolor{SkyBlue}{\textbf{FreqPhys}} \textbf{(Ours)} & - & {\textbf{0.43}} & {\textbf{0.79}} & {\textbf{1.00}} & {\textbf{0.95}} & {\textbf{3.15}} & {\textbf{0.99}} & {\textbf{10.11}} & {\textbf{14.34}} & {\textbf{0.48}} & {\textbf{8.91}} & {\textbf{12.86}} & {\textbf{0.57}}\\
\bottomrule\bottomrule
\end{tabular}}
\end{table*}

\subsection{Cross-dataset Evaluation}
In addition to intra-dataset evaluation, we further conduct cross-dataset experiments to simulate deployment in unseen environments and rigorously assess domain invariance. 
As shown in Tab.~\ref{tab:cross-dataset}, we evaluate four cross-dataset transfer settings. 
These datasets exhibit significant domain gaps due to differences in illumination conditions, motion patterns, camera devices, and subject appearance, making cross-dataset generalization particularly challenging. 
Consequently, most existing methods experience notable performance degradation when transferring from a relatively simple source domain (e.g., PURE or UBFC-rPPG) to more complex target domains such as MMPD. 
In contrast, benefiting from physiological spectrum modeling, our method effectively captures domain-invariant physiological priors and consistently achieves the best performance across all transfer settings, demonstrating stronger robustness and generalization capability under substantial cross-dataset domain shifts.

More extended cross-dataset testing results are provided in \textbf{Appendix~\ref{appendix_experiments}}.

\begin{table}[t]
\begin{minipage}[l]{0.43\linewidth}
\centering
\renewcommand\arraystretch{1.0}
\caption{{Impact of physiological frequency denoiser components.}}
\vspace{-2.5ex}
\resizebox{1.0\linewidth}{!}{
\begin{tabular}{ccc|ccc|ccc}
\toprule
\rowcolor[HTML]{f8f9fa} \multirow{2}{*}{\textbf{PBF}} & \multirow{2}{*}{\textbf{PSM}} & \multirow{2}{*}{\textbf{ASS}} & \multicolumn{3}{c|}{\textbf{VIPL-HR}} & \multicolumn{3}{c}{\textbf{MR-NIRP-Car}}\\ 
& & & MAE$\downarrow$ & RMSE$\downarrow$ & $r\uparrow$ & MAE$\downarrow$ & RMSE$\downarrow$ & $r\uparrow$\\
\midrule
\checkmark & - & - & 4.03 & 6.77 & 0.82 & 6.59 & 8.46 & 0.29\\
- & \checkmark & - & 3.98 & 6.55 & 0.84 & 6.24 & 7.62 & 0.32\\
- & - & \checkmark & 4.10 & 6.95 & 0.82& 6.81 & 8.03 & 0.26\\
\checkmark & \checkmark & - & 3.91 & 6.42 & 0.85 & 6.23 & 7.11 & 0.38\\
\checkmark & - & \checkmark & 4.11 & 6.71 & 0.84 & 6.67 & 7.93 & 0.31\\
\rowcolor[HTML]{EBFEE8} \checkmark & \checkmark & \checkmark & \textbf{3.79} & \textbf{6.34} & \textbf{0.86} & \textbf{5.75} & \textbf{6.08} & \textbf{0.54}\\
\bottomrule
\end{tabular}}
\label{tab:components}
\end{minipage}
\hfill
\begin{minipage}[l]{0.55\linewidth}
\centering
\renewcommand\arraystretch{1.0}
\caption{{Generality of physiological frequency denoiser (PFD) module}.}
\vspace{-2.5ex}
\resizebox{1.0\linewidth}{!}{
\begin{tabular}{c|ccc|ccc}
\toprule
\rowcolor[HTML]{f8f9fa} \multirow{2}{*}{\textbf{Method}} & \multicolumn{3}{c|}{\textbf{VIPL-HR}} & \multicolumn{3}{c}{\textbf{MR-NIRP-Car}}\\ 
& MAE$\downarrow$ & RMSE$\downarrow$ & $r\uparrow$ & MAE$\downarrow$ & RMSE$\downarrow$ & $r\uparrow$\\
\midrule
RhythmMamba~\cite{zou2025rhythmmamba} & 4.30 & 7.49 & 0.81 & 8.07 & 10.03 & 0.24\\
\rowcolor[HTML]{EBFEE8} RhythmMamba~\cite{zou2025rhythmmamba}+\textbf{PFD (Ours)} & \textbf{4.15} & \textbf{7.13} & \textbf{0.82} & \textbf{7.88} & \textbf{9.21} & \textbf{0.29}\\
PhysDiff~\cite{qian2025physdiff} & 3.92 & 6.65 & 0.85 & 7.40 & 8.39 & 0.31\\
\rowcolor[HTML]{EBFEE8} PhysDiff~\cite{qian2025physdiff}+\textbf{PFD (Ours)} & \textbf{3.84} & \textbf{6.44} & \textbf{0.86} & \textbf{7.04} & \textbf{8.13} & \textbf{0.34}\\
\textcolor{SkyBlue}{\textbf{FreqPhys}} w/o \textbf{PFD (Ours)} & 4.22 & 7.17 & 0.83& 5.93 & 6.55 & 0.46\\
\rowcolor[HTML]{EBFEE8} \textcolor{SkyBlue}{\textbf{FreqPhys}} & \textbf{3.79} & \textbf{6.34} & \textbf{0.86} & \textbf{5.75} & \textbf{6.08} & \textbf{0.54}\\
\bottomrule
\end{tabular}}
\label{tab:generality}
\end{minipage}
\end{table}

\subsection{Ablation Studies}\label{sec:ablation}
\noindent\textbf{How effective are the components of the physiological frequency denoiser?}
To investigate the contribution of each component in the proposed physiological frequency denoiser, we conduct detailed ablation studies on the three modules: \textit{Physiological Bandpass Filter} (PBF), \textit{Physiological Spectrum Modulation} (PSM), and \textit{Adaptive Spectrum Selection} (ASS). 
As shown in Tab.~\ref{tab:components}, each module improves the performance, and their combination yields the best results, indicating that these components work synergistically to suppress spectral noise and enhance physiologically meaningful frequency representations. 
Furthermore, as reported in Tab.~\ref{tab:generality}, when the physiological frequency denoiser module is removed and only the time-domain MSTmap is used as the conditional input, the performance drops noticeably, further demonstrating the importance of incorporating physiological frequency priors for robust rPPG estimation.

\noindent\textbf{How general is the physiological frequency denoiser?}
In addition to analyzing the contribution of individual components, we further investigate the generality of the proposed \textit{Physiological Frequency Denoiser} (PFD). 
It is designed as a modular plug-in that can be seamlessly integrated into existing rPPG architectures. 
As shown in Tab.~\ref{tab:generality}, we evaluate the complete PFD module (PBF+PSM+ASS) when incorporated into two state-of-the-art models, RhythmMamba~\cite{zou2025rhythmmamba} and PhysDiff~\cite{qian2025physdiff}. 
Specifically, PFD is inserted before each {Multi-temporal Constraint Mamba} block in RhythmMamba and before each {Spatial–Temporal Hybrid Attention} block in PhysDiff, while keeping all other settings unchanged. 
Across all evaluation metrics on the VIPL-HR dataset, both backbones achieve consistent and notable performance gains after integrating PFD. 
These results demonstrate the strong generality of PFD, indicating that it can be seamlessly integrated into diverse backbone architectures while providing cleaner and more physiologically meaningful spectral clues for rPPG estimation.

\begin{wraptable}[10]{r}{6.1cm}
\centering
\vspace{-6ex}
\caption{\textbf{Computational cost}. Best results are marked in bold and the second best in underline.}
\scriptsize
\resizebox{1.0\linewidth}{!}{
\begin{tabular}{l|ccccc}
\toprule
\multirow{2}{*}{\textbf{Method}} & Parameters & Flops & Throughput & Inference & Memory\\ 
& (M)$\downarrow$ & (G)$\downarrow$ & (Kps)$\uparrow$ & time (ms)$\downarrow$ & (M)$\downarrow$\\
\midrule
DeepPhys~\cite{chen2018deepphys} & 1.98 & 111.67 & 28.89 & 34.61 & 10638\\
PhysNet~\cite{yu2019remote1} & \textbf{0.77} & 65.74 & \underline{68.73} & \underline{14.55} & 3750\\
TS-CAN~\cite{liu2020multi} & 1.98 & 111.67 & 26.23 & 38.13 & 11834\\
PhysFormer~\cite{yu2022physformer} & 7.38 & 47.44 & 50.79 & 19.69 & 6480\\
EfficientPhys~\cite{liu2023efficientphys} & 1.91 & 56.06 & 41.36 & 24.18 & 7814\\
RhythmMamba~\cite{zou2025rhythmmamba} & 2.00 & \underline{12.41} & 27.16 & 36.82 & 2450\\
PhysDiff~\cite{qian2025physdiff} & 2.64 & {22.46} & 60.23 & 16.60 & \underline{1246}\\
\rowcolor[HTML]{EBFEE8} \textcolor{SkyBlue}{\textbf{FreqPhys}} \textbf{(ours)} & \underline{0.86} & \textbf{7.75} & \textbf{85.44} & \textbf{11.70} & \textbf{874}\\
\bottomrule
\end{tabular}}
\label{tab:computational_cost}
\end{wraptable}
\subsection{Computational Cost}
To evaluate the deployment efficiency of our method, we conduct a unified 10-second inference benchmark on a single NVIDIA RTX 4090 GPU (24GB). 
The detailed computational statistics are reported in Tab.~\ref{tab:computational_cost}. 
Notably, compared with the lightweight RhythmMamba, our method consistently achieves lower computational overhead across all key metrics. 
Specifically, our \textcolor{SkyBlue}{\textbf{FreqPhys}} reduces the parameters from 2.00M to 0.86M (57.0\% reduction) and decreases FLOPs from 12.41G to 7.75G (37.6\% reduction). 
In addition, our method significantly improves runtime efficiency, increasing throughput from 27.16 Kps to 85.44 Kps (3.1$\times$ faster) and reducing inference latency from 36.82 ms to 11.70 ms. 
Meanwhile, the peak GPU memory usage is reduced from 2450 MB to 874 MB (64.3\% reduction). 
These results demonstrate that our method achieves substantially lower computational cost, making it highly suitable for real-time and resource-constrained deployment scenarios.

\begin{figure}[t]
\centering
\includegraphics[width=0.9\linewidth]{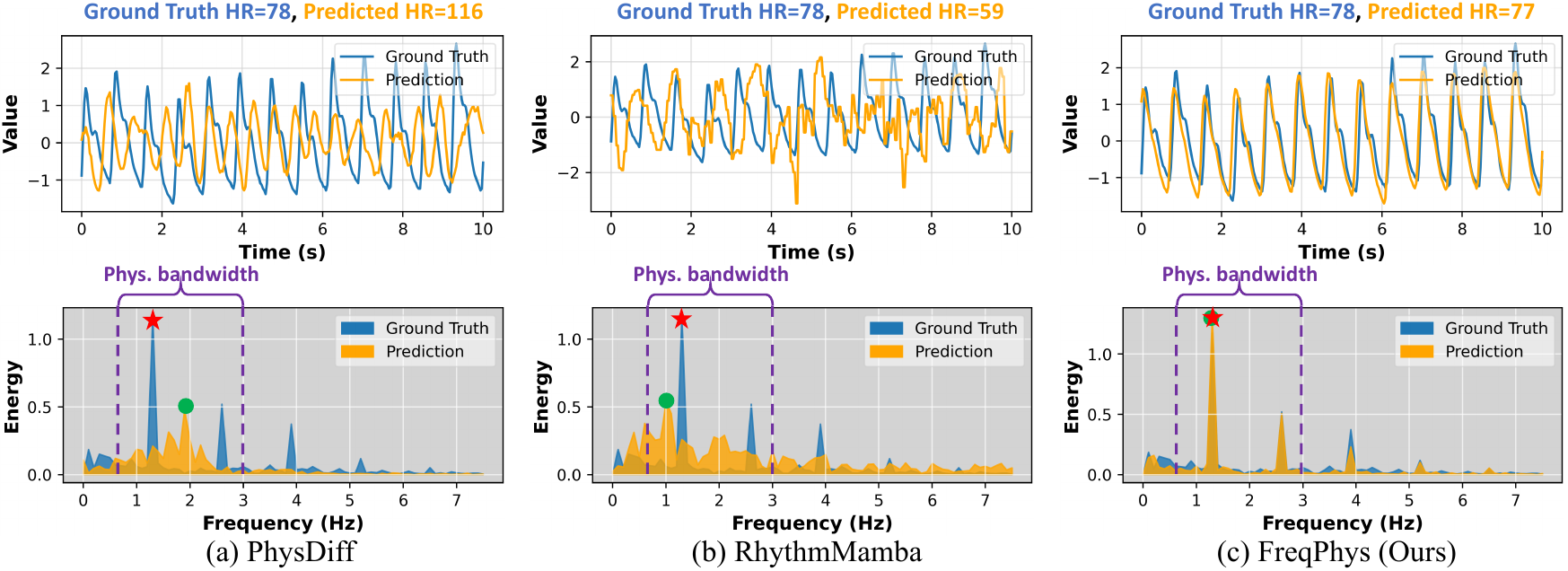}
\vspace{-1ex}
\caption{Time and frequency domain visualizations of rPPG signal predictions on the VIPL dataset under head motion scenario.
In the frequency-domain plots, the purple dashed box indicates the physiological signal bandwidth ranging from 0.66 to 3.0 Hz, corresponding to typical human cardiac frequencies. $\textcolor{red}{\star}$ and $\textcolor{mygreen}{\bullet}$ denote the spectral peaks of the ground-truth and predicted heart rates, respectively.}
\label{fig:rppg_visualization}
\end{figure}

\subsection{Qualitative Analysis}
\textbf{Visualization of rPPG Prediction.}\label{sec:visualization}
We visualize the rPPG predictions to highlight the improvements of our method in rPPG estimation quality.
We present a prediction showcase on the VIPL dataset under the head motion scenario, as shown in Fig.~\ref{fig:rppg_visualization}.
PhysDiff~\cite{qian2025physdiff}, RhythmMamba~\cite{zou2025rhythmmamba} are selected as the representative SOTA methods.
We observe that the rPPG signal predicted by PhysDiff and RhythmMamba exhibits numerous burrs, which reflect the limitations of modeling in the time domain, namely, the difficulty in capturing dominant physiological frequency components and being vulnerable to noise.
Our \textcolor{SkyBlue}{\textbf{FreqPhys}} addresses this limitation effectively, which not only keeps pace with the label sequence but also accurately exhibits a smoother appearance with fewer irregularities.
In addition, we also observe that the frequency spectra of PhysDiff and RhythmMamba exhibit relatively dispersed energy distributions. 
In contrast, our method effectively concentrates spectral energy on several key frequency components corresponding to physiological signals, resulting in an obvious sparsity in the frequency domain. 

\begin{wrapfigure}[10]{r}{6.2cm}
\centering
\vspace{-5ex}
\includegraphics[width=1.0\linewidth]
{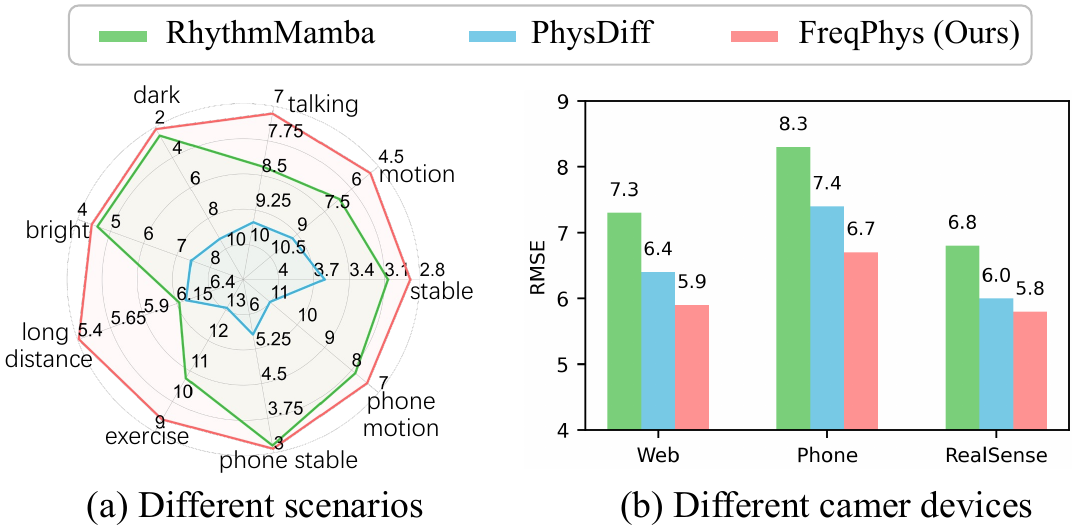}
\vspace{-4ex}
\caption{HR estimation results on VIPL-HR under different scenarios and cameras.}
\label{fig:different_scenarios_sources_ranges}
\end{wrapfigure}
\noindent\textbf{Qualitative Results for Robustness.}
To evaluate the robustness of our model across diverse scenarios and camera device conditions, we provide detailed results from the VIPL-HR dataset, encompassing 9 distinct scenarios and 3 types of camera devices. 
As illustrated in Fig.~\ref{fig:different_scenarios_sources_ranges}, our proposed method consistently outperforms other methods across these varied conditions, which indicates its robustness in remote physiological signal measurement.

\section{Conclusion}
In this paper, we introduced \textcolor{SkyBlue}{\textbf{FreqPhys}}, a diffusion-based framework that leverages implicit physiological frequency priors for robust remote photoplethysmography. 
Unlike existing approaches that rely primarily on time-domain modeling or incorporate spectral clues only at the loss or representation level, our method integrates physiological spectral priors directly into the iterative diffusion denoising process. 
This design enables more effective suppression of motion-induced artifacts while preserving meaningful cardiac rhythms. 
Extensive experiments on four public datasets demonstrate that our method achieves state-of-the-art performance in HR, HRV, and RF estimation, while also exhibiting strong cross-dataset generalization. 

\bibliographystyle{splncs04}
\bibliography{main}

\clearpage
\appendix
\setcounter{section}{0} 
\renewcommand{\thesection}{\Alph{section}}

\section{Detailed Related Work} \label{appendix_related_work}
\subsection{Remote Physiological Measurement}
Remote photoplethysmography (rPPG) relies on the physiological phenomenon that periodic cardiac contraction and relaxation induce subtle fluctuations in subcutaneous blood volume. 
These variations modulate the absorption and scattering properties of the skin, leading to slight color changes that can be captured by cameras, although they are imperceptible to the human eye.
With the success of deep learning (DL), data-driven approaches have become the dominant paradigm for rPPG estimation. 
HR-CNN~\cite{vspetlik2018visual} represents one of the earliest attempts to apply DL to rPPG by introducing a two-stage convolutional neural network for heart rate estimation from facial videos. 
PhysNet~\cite{yu2019remote1} subsequently proposed an end-to-end spatio-temporal network that directly reconstructs rPPG signals from video sequences, enabling accurate heart rate and heart rate variability estimation through dedicated temporal modeling and signal-level supervision. 
DeepPhys~\cite{chen2018deepphys} further introduced a motion representation based on normalized frame differences and employed attention mechanisms to guide the extraction of physiological signals from raw facial videos. 
TS-CAN~\cite{liu2020multi} improved efficiency and accuracy by integrating temporal shift modules with attention mechanisms and multi-task learning, allowing the network to simultaneously estimate blood volume pulse and respiration signals with low computational cost. 
CVD~\cite{niu2020video} proposed a multi-scale spatial–temporal map representation and introduced a cross-verified feature disentangling strategy to separate physiological information from noise. 
Dual-GAN~\cite{lu2021dual} further explored adversarial learning by employing two GANs to jointly model BVP prediction and noise distribution, improving robustness across different facial regions.

To better capture long-range temporal dependencies, several studies have adopted Transformer-based architectures, including PhysFormer~\cite{yu2022physformer}, EfficientPhys~\cite{liu2023efficientphys}, Dual-TL~\cite{qian2024dual}, and RhythmFormer~\cite{zou2025rhythmformer}, which leverage self-attention mechanisms to model global temporal relationships in rPPG signals. 
More recently, state-space modeling has been explored for efficient long-sequence processing. 
Some Mamba-based frameworks, such as RhythmMamba~\cite{zou2025rhythmmamba} and PhysMamba~\cite{luo2024physmamba}, incorporate state-space models to achieve linear computational complexity while maintaining strong performance with reduced memory consumption and faster inference speed. 
Beyond architectural improvements, several works aim to enhance robustness and generalization in rPPG estimation. 
For example, NEST~\cite{lu2023neuron} introduces a neuron structure modeling strategy to improve domain generalization by maximizing feature-space coverage during training, thereby reducing under-optimized feature activation under unseen environmental conditions. 
PHASE-Net~\cite{zhao2025phase} proposes a physics-grounded framework derived from hemodynamic principles, modeling the pulse signal as a second-order dynamical system and implementing an efficient gated temporal convolutional network for accurate physiological signal recovery. 
Recently, cross-modal learning has also been explored for rPPG analysis. 
PhysLLM~\cite{xie2025physllm} integrates large language models with rPPG-specific modules through a collaborative optimization framework, enabling cross-modal reasoning and improved robustness by incorporating physiological knowledge and environmental context into the signal estimation process. 
In addition, the scarcity of labeled physiological data has motivated the development of self-supervised learning strategies. 
Contrastive learning approaches~\cite{gideon2021way,sun2024contrast} and masked autoencoding methods~\cite{liu2024rppg,speth2023non} enable robust representation learning from large amounts of unlabeled facial videos, demonstrating promising potential for real-world rPPG applications.

\subsection{Diffusion Model for rPPG Estimation}
Diffusion models have recently emerged as powerful generative frameworks that progressively corrupt training data with noise and learn to reverse this process to recover clean samples. 
Originally proposed for image generation~\cite{ho2020denoising}, denoising diffusion probabilistic models (DDPMs) have since been extended to a wide range of applications, including cross-modal generation~\cite{avrahami2022blended, fan2022frido}, video editing~\cite{ceylan2023pix2video}, and object detection~\cite{chen2022diffusiondet}. 
Their strong capability to model complex data distributions and perform iterative denoising makes them particularly suitable for recovering structured signals from heavily corrupted observations.

In the field of remote physiological measurement, the presence of motion artifacts, illumination variations, and extremely weak physiological signals poses significant challenges for reliable signal extraction. 
These characteristics naturally motivate the use of diffusion-based models, which can progressively remove noise and recover latent physiological signals. 
A recent pioneering work, PhysDiff~\cite{qian2025physdiff}, introduces diffusion modeling to the rPPG task by proposing a dynamic-aware signal representation. 
Specifically, PhysDiff decomposes rPPG signals into two components: a trend term that captures temporal directionality associated with capillary expansion and contraction, and an amplitude term that represents the fluctuation intensity of the signal.
Building upon this line of research, our work explores a complementary perspective by incorporating frequency-domain modeling into diffusion-based rPPG estimation. 
Inspired by recent studies showing the advantages of frequency modeling for sequential signal analysis~\cite{yi2024filternet}, we introduce physiological frequency priors to guide the denoising process. 
While PhysDiff focuses on modeling temporal dynamics in the time domain, our approach explicitly captures the periodic structure of rPPG signals by leveraging frequency-domain representations, enabling more effective separation of cardiac rhythms from motion-induced interference.

\section{Preliminaries}\label{appendix_preliminaries}
\subsection{Multiplication of Complex Numbers}\label{complex_multiplication}
Consider two complex numbers $\mathcal{Z}_1 = a + jb$ and $\mathcal{Z}_2 = c + jd$, where $a$ and $c$ denote the real parts of $\mathcal{Z}_1$ and $\mathcal{Z}_2$, respectively, and $b$ and $d$ represent the corresponding imaginary parts. 
The multiplication of two complex numbers involves applying the distributive property of multiplication over addition, along with the identity $j^2 = -1$, where $j$ is the imaginary unit.
The product of $\mathcal{Z}_1$ and $\mathcal{Z}_2$ is computed as follows:
\begin{equation}
\begin{aligned}
    \mathcal{Z}_1 \mathcal{Z}_2 &= (a + jb)(c + jd)\\
    &= ac + a(jd) + jb(c) + jb(jd)\\
    &= ac + j(ad) + j(bc) + j^2(bd)\\
    &= (ac - bd) + j(ad + bc).
\end{aligned}
\end{equation}
Therefore, the multiplication of two complex numbers results in another complex number, whose real part is given by $ac-bd$ and whose imaginary part is given by $ad+bc$.

\subsection{Discrete Fourier Transform}\label{DFT}
The \emph{Discrete Fourier Transform} (DFT)~\cite{brigham1967fast} is a fundamental tool in signal processing and spectral analysis. 
It transforms a discrete-time signal from the temporal domain to the frequency domain, enabling a decomposition of the signal into its constituent frequency components. 
This facilitates the precise identification and analysis of underlying periodic patterns and oscillatory behavior.
Given a discrete real-valued temporal signal $\bm{x} \in \mathbb{R}^T$, its frequency-domain representation $\bm{x}^{F} \in \mathbb{C}^T$ is a complex-valued sequence defined by:
\begin{equation}
\begin{aligned}
    \bm{x}^{F}[i]
    &= \sum_{t=0}^{T-1} \bm{x}[t] \cdot e^{-j2\pi it/T} \\
    &= \underbrace{\sum_{t=0}^{T-1} \bm{x}[t] \cdot 
    \cos\!\left(\frac{2\pi it}{T}\right)}_{\text{Real Part}}
    - j\,\underbrace{\sum_{t=0}^{T-1} \bm{x}[t] \cdot 
    \sin\!\left(\frac{2\pi it}{T}\right)}_{\text{Imaginary Part}},
\end{aligned}
\end{equation}
where $i \in {0, 1, \ldots, T-1}$ indexes the discrete frequency bins, and $j$ is the imaginary unit such that $j^2 = -1$.
The corresponding physical frequency for the $i$-th bin is given by $\lambda_i = i f_s / T$ Hz, where $f_s$ is the sampling rate of the signal $\bm{x}$. 
For real-valued signals, the DFT exhibits conjugate symmetry:
\begin{equation}
    \bm{x}^{F}[T-n]=\overline{\bm{x}^{F}[n]}, \quad \text { for } n=1, \ldots,\lfloor T / 2\rfloor,
\end{equation}
allowing us to retain only the first $\lfloor T/2 \rfloor + 1$ frequency components without loss of information. Hence, we define the DFT operator as a mapping $\mathcal{F}: \mathbb{R}^{T} \rightarrow \mathbb{C}^{\lfloor T/2 \rfloor + 1}$ for computational efficiency.
Each complex coefficient $\bm{x}^{F}[i]$ in the frequency domain can be uniquely expressed in terms of its amplitude and phase:
\begin{equation}
\begin{aligned}
    A[i] = |\bm{x}^{F}[i]| 
    = \sqrt{ \operatorname{Re}(\bm{x}^{F}[i])^2 + \operatorname{Im}(\bm{x}^{F}[i])^2 }, \;\;\;\;
    \phi[i] = \tan^{-1}\!\left( 
    \frac{ \operatorname{Im}(\bm{x}^{F}[i]) }{ \operatorname{Re}(\bm{x}^{F}[i]) } 
    \right),
\end{aligned}
\end{equation}
where $\operatorname{Re}(\cdot)$ and $\operatorname{Im}(\cdot)$ denote the real and imaginary parts, respectively. 
The amplitude $A[i]$ reflects the energy concentration at frequency $\lambda_i$, while the phase $\phi[i]$ captures the temporal alignment of the sinusoidal component at that frequency.
Since the DFT is a bijective (invertible) transformation, the original time-domain signal $\bm{x}[t]$ can be exactly reconstructed from its frequency-domain representation $\bm{x}^{F}[i]$ via the Inverse Discrete Fourier Transform (IDFT), expressed as:
\begin{equation}
\begin{aligned}
\bm{x}[t] &= \mathcal{F}^{-1}(\bm{x}^{F})[t] 
=  \frac{1}{T}\sum_{i=0}^{T-1} \bm{x}^{F}[i] \cdot e^{j2\pi it/T}, \quad t = 0, 1, \ldots, T-1.
\end{aligned}
\end{equation}

\section{Theoretical Proof}\label{appendix_proof}
\subsection{Proof of Theorem 1}\label{theorem_1_proof}
\begin{theorem}
    (Frequency-domain Convolution Theorem)  The multiplication of two signals in the frequency domain is equivalent to the frequency transformation of a circular convolution of these two signals in the temporal domain, which can be summarized as:
    \begin{equation}
        \mathcal{F} [\mathbf{M}(v)\;\otimes\;\mathbf{Z}(v)] = \mathcal{F}(\mathbf{M}(v))\;\odot\;\mathcal{F}(\mathbf{Z}(v)),
    \end{equation}
    where $\otimes$ and $\odot$ represent the circular convolutional and element-multiplication operation, respectively, $\mathbf{M}(v)$ and $\mathbf{Z}(v)$ represent two signals for the time variable $v$, and $\mathcal{F}(\cdot)$ denotes the Discrete Fourier Transform.
\end{theorem}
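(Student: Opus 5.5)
The plan is to verify the identity entry by entry, working with the full-length DFT on $\mathbb{Z}_T$ as defined in Appendix~\ref{DFT}. Let $\mathbf{M},\mathbf{Z}\in\mathbb{R}^T$ (or $\mathbb{C}^T$), indexed by $v=0,\dots,T-1$. Define the circular convolution by
\begin{equation*}
(\mathbf{M}\otimes\mathbf{Z})[v]=\sum_{u=0}^{T-1}\mathbf{M}[u]\,\mathbf{Z}[(v-u)\bmod T].
\end{equation*}
I will fix this definition first, because the theorem holds only for the circular (periodic) convolution and not for the linear one.

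For each frequency index $i$, I substitute this definition into $\mathcal{F}(\cdot)[i]=\sum_{v}(\cdot)[v]\,e^{-j2\pi iv/T}$. This gives a double sum over $u$ and $v$. Since both sums are finite, I can interchange them. I then factor the exponential as
\begin{equation*}
e^{-j2\pi iv/T}=e^{-j2\pi iu/T}\,e^{-j2\pi i(v-u)/T},
\end{equation*}
and pull the first factor out of the inner sum.

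The main step is to show that the inner sum $\sum_{v}\mathbf{Z}[(v-u)\bmod T]\,e^{-j2\pi i(v-u)/T}$ equals $\mathcal{F}(\mathbf{Z})[i]$ for every fixed $u$. Two facts give this:
\begin{itemize}
\item As $v$ runs over $\{0,\dots,T-1\}$, the shifted index $w=(v-u)\bmod T$ also runs over $\{0,\dots,T-1\}$ exactly once, since the shift is a bijection of $\mathbb{Z}_T$.
\item The kernel is $T$-periodic, because $e^{-j2\pi i(w+T)/T}=e^{-j2\pi iw/T}$. So replacing $v-u$ by its residue $w$ leaves the exponential unchanged.
\end{itemize}
This reindexing is where I expect the real care to be needed. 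The whole proof rests on the periodicity that comes with the modular wrap-around, and it is exactly this step that fails for linear convolution without zero-padding.

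With the inner sum identified, the remaining outer sum is $\sum_u \mathbf{M}[u]e^{-j2\pi iu/T}=\mathcal{F}(\mathbf{M})[i]$. This gives $\mathcal{F}(\mathbf{M}\otimes\mathbf{Z})[i]=\mathcal{F}(\mathbf{M})[i]\,\mathcal{F}(\mathbf{Z})[i]$ for every $i$, which is the claimed Hadamard-product identity.

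Finally, I will note how this fits the paper's setup. The paper uses the truncated operator $\mathcal{F}:\mathbb{R}^T\to\mathbb{C}^{\lfloor T/2\rfloor+1}$, and the identity restricts directly to those indices. For real signals, conjugate symmetry makes the truncated spectra determine the full ones.

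For Equation~\ref{eq:circular_convolution}, the modulation $\mathbf{M}$ is given in the frequency domain, so I would read it as $\mathcal{F}$ of the time-domain kernel $\mathcal{F}^{-1}(\mathbf{M})$ and then apply the theorem to that kernel. One caveat applies: this kernel is real only if $\mathbf{M}$ is treated as a half-spectrum extended by Hermitian symmetry, as irfft implicitly does. I will mention this caveat but not belabor it.
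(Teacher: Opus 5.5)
Your proposal is correct and follows essentially the same route as the paper's proof: substitute the circular convolution into the DFT, interchange the finite sums, reindex with $r=(v-u)\bmod T$, and factor the result into $\mathcal{F}(\mathbf{M})[i]\,\mathcal{F}(\mathbf{Z})[i]$. Your explicit appeal to the $T$-periodicity of $e^{-j2\pi i w/T}$ justifies a step the paper uses without comment, namely replacing $e^{-j2\pi i v/T}$ by $e^{-j2\pi i (r+u)/T}$ after setting $v=(r+u)\bmod T$.
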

\textit{Proof}. Let $\mathbf{M}(v)$ and $\mathbf{Z}(v)$ are two length $T$ signals.
Let $\mathbf{M}(v)$ and $\mathbf{Z}(v)$ be two discrete signals of length $T$, defined over the time index $v = 0, 1, \dots, T - 1$. Let their DFTs be denoted by $\mathcal{F}(\mathbf{M}(v))$ and $\mathcal{F}(\mathbf{Z}(v))$, respectively.
We define the circular convolution of $\mathbf{M}(v)$ and $\mathbf{Z}(v)$ as:
\begin{equation}
    \mathbf{Y}(v) = \mathbf{M}(v)\otimes\mathbf{Z}(v) = \sum_{u=0}^{T-1} \mathbf{M}(u) \cdot \mathbf{Z}((v-u) \mod T).
\end{equation}
The DFT of the resulting signal $\mathbf{Y}(v)$ is given by:
\begin{equation}
    \mathcal{F}(\mathbf{Y}(v)) = \sum_{v=0}^{T-1} \mathbf{Y}(v) \cdot e^{-j 2\pi i v/T}, \quad i = 0, 1, \dots, T - 1,
\end{equation}
where $j$ is the imaginary unit, and $i$ denotes the frequency index.
Substituting the expression for $\mathbf{Y}(v)$ into the DFT, we obtain:
\begin{equation}
\begin{aligned}
    \mathcal{F}(\mathbf{Y}(v))
    &= \sum_{v=0}^{T-1} \left( \sum_{u=0}^{T-1} \mathbf{M}(u) \cdot \mathbf{Z}((v-u)\mod T) \right) e^{-j 2\pi i v/T} \\
    &= \sum_{u=0}^{T-1} \mathbf{M}(u) \cdot \sum_{v=0}^{T-1} \mathbf{Z}((v-u) \mod T) \cdot e^{-j 2\pi i v/T}.
\end{aligned}
\end{equation}
Next, we perform a change of variable by letting $r = (v - u) \mod T$, which implies $v = (r + u) \mod T$:
\begin{equation}
\begin{aligned}
    \mathcal{F}(\mathbf{Y}(v))
    &= \sum_{u=0}^{T-1} \mathbf{M}(u) \cdot \sum_{r=0}^{T-1} \mathbf{Z}(r) \cdot e^{-\frac{j 2\pi i (r + u)}{T}} \\
    &= \sum_{u=0}^{T-1} \mathbf{M}(u) \cdot e^{-\frac{j 2\pi i u}{T}} \cdot \sum_{r=0}^{T-1} \mathbf{Z}(r) \cdot e^{-\frac{j 2\pi i r}{T}}.
\end{aligned}
\end{equation}
Rewriting the above expression, we have:
\begin{equation}
\begin{aligned}
    \mathcal{F}(\mathbf{Y}(v))
    &= \left( \sum_{u=0}^{T-1} \mathbf{M}(u) \cdot e^{-\frac{j 2\pi i u}{T}} \right)
    \cdot \left( \sum_{r=0}^{T-1} \mathbf{Z}(r) \cdot e^{-\frac{j 2\pi i r}{T}} \right) \\
    &= \mathcal{F}(\mathbf{M}(v)) \odot \mathcal{F}(\mathbf{Z}(v)).
\end{aligned}
\end{equation}
Thus, the Discrete Fourier Transform of the circular convolution of two signals $\mathbf{M}(v)$ and $\mathbf{Z}(v)$ is equivalent to the element-wise product of their respective DFTs, i.e., $\mathcal{F} [\mathbf{M}(v)\otimes\mathbf{Z}(v)] = \mathcal{F}(\mathbf{M}(v))\odot\mathcal{F}(\mathbf{Z}(v))$. 
This completes the proof.

\subsection{Proof of Proposition 1}\label{proposition_1_proof}
In the \textbf{Reverse Process} part of \textbf{Section 3.3}, we briefly stated that incorporating the physiological frequency prior $\mathbf{C}^{\mathbf{P}}$ can help reduce the uncertainty in the reverse diffusion process, thereby facilitating more accurate rPPG signal reconstruction. 
To formalize this intuition that physiological frequency priors provide additional information for the reverse diffusion transition, we present the following proposition.
\begin{proposition}\label{Proposition_1}
Let $\mathbf{Y}_{k}$ denote the noisy latent variable at diffusion step $k$, $\mathbf{X}$ denote the observed input (e.g., MSTmap features), and $\mathbf{C}^{\mathbf{P}}$ denote the physiological frequency prior. 
Conditioning the reverse diffusion process on $\mathbf{C}^{\mathbf{P}}$ reduces the uncertainty of the reverse transition:
\begin{equation}
\mathbf{H}(\mathbf{Y}_{k-1} \mid \mathbf{Y}_k, \mathbf{X}, \mathbf{C}^{\mathbf{P}})
<
\mathbf{H}(\mathbf{Y}_{k-1} \mid \mathbf{Y}_k, \mathbf{X}).
\end{equation}
\end{proposition}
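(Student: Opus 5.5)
The argument rests on the standard identity that conditioning never increases entropy: the gap equals a conditional mutual information. We read $\mathbf{H}$ as (differential) conditional entropy under the joint law of $(\mathbf{Y}_{k-1},\mathbf{Y}_k,\mathbf{X},\mathbf{C}^{\mathbf{P}})$ induced by the data distribution and the forward process in Equation~\ref{eq:process_q}. The chain rule gives
\begin{equation*}
\mathbf{H}(\mathbf{Y}_{k-1}\mid \mathbf{Y}_k,\mathbf{X}) - \mathbf{H}(\mathbf{Y}_{k-1}\mid \mathbf{Y}_k,\mathbf{X},\mathbf{C}^{\mathbf{P}}) = \mathbf{I}(\mathbf{Y}_{k-1};\mathbf{C}^{\mathbf{P}}\mid \mathbf{Y}_k,\mathbf{X}).
\end{equation*}
Since conditional mutual information is an expected KL divergence, it is nonnegative. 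This immediately yields the non-strict version of the proposition.

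The substantive step is strictness, i.e.\ showing that $\mathbf{I}(\mathbf{Y}_{k-1};\mathbf{C}^{\mathbf{P}}\mid \mathbf{Y}_k,\mathbf{X})>0$. Equivalently, we must rule out the Markov chain $\mathbf{Y}_{k-1}\to(\mathbf{Y}_k,\mathbf{X})\to\mathbf{C}^{\mathbf{P}}$. First we write $\mathbf{Y}_{k-1}=\sqrt{\bar\alpha_{k-1}}\mathbf{Y}_0+\sqrt{1-\bar\alpha_{k-1}}\epsilon'$ using Equation~\ref{eq:Y_k}. Because the forward noise is independent of $(\mathbf{X},\mathbf{C}^{\mathbf{P}})$ given $\mathbf{Y}_0$, the dependence on $\mathbf{C}^{\mathbf{P}}$ enters only through $\mathbf{Y}_0$. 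Concretely,
\begin{equation*}
p(\mathbf{Y}_{k-1}\mid\mathbf{Y}_k,\mathbf{X},\mathbf{C}^{\mathbf{P}})=\int q(\mathbf{Y}_{k-1}\mid \mathbf{Y}_k,\mathbf{Y}_0)\,p(\mathbf{Y}_0\mid \mathbf{Y}_k,\mathbf{X},\mathbf{C}^{\mathbf{P}})\,d\mathbf{Y}_0,
\end{equation*}
where $q(\mathbf{Y}_{k-1}\mid\mathbf{Y}_k,\mathbf{Y}_0)$ is the Gaussian posterior of DDPM. This kernel is injective on the mean: its mean is affine in $\mathbf{Y}_0$ with a nonzero coefficient whenever $\bar\alpha_{k-1}<1$ and $\beta_k>0$. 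Gaussian mixtures with a fixed covariance are identifiable, so the chain holds only if $\mathbf{Y}_0\to(\mathbf{Y}_k,\mathbf{X})\to\mathbf{C}^{\mathbf{P}}$ also holds. Strictness is therefore reduced to the assumption that $\mathbf{C}^{\mathbf{P}}$ carries information about the clean pulse $\mathbf{Y}_0$ beyond what $(\mathbf{Y}_k,\mathbf{X})$ already contains.

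This last point is the main obstacle, and it cannot be proved in full generality. If $\mathbf{C}^{\mathbf{P}}$ were a deterministic function of $\mathbf{X}$ alone, for example the PBF output of the same MSTmap, then $\mathbf{I}=0$ and equality would hold. We would therefore add an explicit non-redundancy assumption: $\mathbf{C}^{\mathbf{P}}$ is not $\sigma(\mathbf{X},\mathbf{Y}_k)$-measurable, and $p(\mathbf{Y}_0\mid\mathbf{Y}_k,\mathbf{X},\mathbf{C}^{\mathbf{P}})\neq p(\mathbf{Y}_0\mid\mathbf{Y}_k,\mathbf{X})$ on a set of positive probability.

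One could motivate this assumption with the paper's Physiological Band Constraint and Dominant Peak Property. Under these priors $\mathbf{Y}_0$ concentrates its energy in $[0.66,3.0]$\,Hz. A frequency prior that encodes this band (or the dominant peak) then restricts the support of $\mathbf{Y}_0$ relative to the posterior given only noisy time-domain inputs. A positive KL divergence follows on that event.

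Finally, we should state clearly that this strict inequality concerns the ideal posterior. For the learned $p_\theta$ of Equation~\ref{eq:reverse}, it justifies the potential reduction in uncertainty, not a guaranteed one.
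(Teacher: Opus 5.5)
Your argument is correct, and it departs from the paper at the step that matters: strictness.

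\textbf{The paper's route.} Its chain-rule manipulation amounts to $\mathbf{H}(\mathbf{Y}_{k-1}\mid\mathbf{Y}_k,\mathbf{X})=\mathbf{H}(\mathbf{Y}_{k-1}\mid\mathbf{Y}_k)-\mathbf{I}(\mathbf{X};\mathbf{Y}_{k-1}\mid\mathbf{Y}_k)$. It asserts that the mutual-information term is strictly positive, on the heuristic that $\mathbf{Y}_{k-1}$ carries one less step of noise. It then transfers the conclusion to $\mathbf{C}^{\mathbf{P}}$ ``following the same reasoning''. That transfer needs $\mathbf{H}(\mathbf{C}^{\mathbf{P}}\mid\mathbf{Y}_{k-1},\mathbf{Y}_k,\mathbf{X})<\mathbf{H}(\mathbf{C}^{\mathbf{P}}\mid\mathbf{Y}_k,\mathbf{X})$, i.e.\ $\mathbf{I}(\mathbf{Y}_{k-1};\mathbf{C}^{\mathbf{P}}\mid\mathbf{Y}_k,\mathbf{X})>0$. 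This is equivalent to the claim itself, so the paper reaches an unconditional-looking statement only by leaving the key step unjustified.

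\textbf{Your route.} You work with this conditional mutual information directly and get the non-strict inequality for free. You then use the DDPM posterior kernel and Gaussian deconvolution to reduce strictness to $\mathbf{C}^{\mathbf{P}}$ being informative about $\mathbf{Y}_0$ beyond $(\mathbf{Y}_k,\mathbf{X})$. Your degenerate case shows that some such hypothesis is unavoidable: if $\mathbf{C}^{\mathbf{P}}$ is a function of $\mathbf{X}$, equality holds, and that is how the pipeline in Fig.~\ref{fig:pipeline} appears to build it. What your route buys is a correct theorem with its hypothesis stated explicitly.

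\textbf{Minor points.} The $\mathbf{Y}_0$-coefficient of the posterior mean is $\sqrt{\bar\alpha_{k-1}}\beta_k/(1-\bar\alpha_k)$, so its nonvanishing needs only $\beta_k>0$. The condition $\bar\alpha_{k-1}<1$ is instead what makes the posterior covariance nondegenerate. Your measurability clause is implied by the second non-redundancy condition, so it can be dropped.

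\textbf{The motivation paragraph.} Appealing to the band constraint and the dominant-peak property does not support your assumption for the ideal posterior. A fixed, sample-independent band is a constant already encoded in the joint law, so it carries zero conditional information. Strictness can only come from data-dependent content of $\mathbf{C}^{\mathbf{P}}$ that is not recoverable from $(\mathbf{X},\mathbf{Y}_k)$. The physiological priors are better read as inductive biases for the learned $p_\theta$, which fits your last paragraph.
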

\textit{Proof}. We use the notion of conditional entropy from information theory to quantify uncertainty. 
In the reverse process of DDPM~\cite{ho2020denoising}, the rPPG signal at step $k$, denoted $\mathbf{Y}_k$, is treated as a condition. 
The uncertainty of the reverse process can thus be expressed as:
\begin{equation}
\begin{aligned}
    \mathbf{H}\left( \mathbf{Y}_{k-1} \mid \mathbf{Y}_k \right)
    = -\int p_\theta\left( \mathbf{Y}_{k-1}, \mathbf{Y}_k \right) \log p_\theta\left( \mathbf{Y}_{k-1} \mid \mathbf{Y}_k \right)\mathrm{d} \mathbf{Y}_{k-1}.
\end{aligned}
\end{equation}
Similarly, in PhysDiff~\cite{qian2025physdiff}, the condition includes only the facial observation sequence $\mathbf{X}$, so the uncertainty is modeled as $\mathbf{H}( \mathbf{Y}_{k-1} \mid \mathbf{Y}_k, \mathbf{X} )$. 
In our proposed method, the condition is extended to include physiological frequency information $\mathbf{C}^{\mathbf{P}}$, yielding $\mathbf{H}( \mathbf{Y}_{k-1} \mid \mathbf{Y}_k, \mathbf{X}, \mathbf{C}^{\mathbf{P}} )$.
From the basic property of conditional entropy, we know:
\begin{equation}
    \mathbf{H}( \mathbf{Y}_{k-1} \mid \mathbf{Y}_k ) \leq \mathbf{H}( \mathbf{Y}_{k-1} ).
\end{equation}
Using the definition of mutual information, we have:
\begin{equation}
    \mathbf{I}( \mathbf{Y}_{k-1}; \mathbf{Y}_k ) = \mathbf{H}( \mathbf{Y}_{k-1} ) - \mathbf{H}( \mathbf{Y}_{k-1} \mid \mathbf{Y}_k ).
\end{equation}
According to Equation 11, we know that $\mathbf{I}( \mathbf{Y}_{k-1}; \mathbf{Y}_k ) > 0$, which implies:
\begin{equation}
    \mathbf{H}( \mathbf{Y}_{k-1} \mid \mathbf{Y}_k ) < \mathbf{H}( \mathbf{Y}_{k-1} ).
\end{equation}
Using the chain rule for entropy, we have:
\begin{equation}
    \mathbf{H}( \mathbf{Y}_{k-1}, \mathbf{Y}_k, \mathbf{X} ) = \mathbf{H}( \mathbf{Y}_{k-1} \mid \mathbf{Y}_k, \mathbf{X} ) + \mathbf{H}( \mathbf{Y}_k, \mathbf{X} ).
\end{equation}
Rewriting this using conditional entropy identities:
\begin{equation}
\begin{aligned}
    \mathbf{H}( \mathbf{Y}_{k-1} \mid \mathbf{Y}_k, \mathbf{X} )
    &= \mathbf{H}( \mathbf{X} \mid \mathbf{Y}_{k-1}, \mathbf{Y}_k )
    + \mathbf{H}( \mathbf{Y}_{k-1}, \mathbf{Y}_k ) 
    \quad - \mathbf{H}( \mathbf{X} \mid \mathbf{Y}_k )
    - \mathbf{H}( \mathbf{Y}_k ) \\
    &= \mathbf{H}( \mathbf{Y}_{k-1} \mid \mathbf{Y}_k )
    + \mathbf{H}( \mathbf{X} \mid \mathbf{Y}_{k-1}, \mathbf{Y}_k )
    - \mathbf{H}( \mathbf{X} \mid \mathbf{Y}_k ).
\end{aligned}
\end{equation}
From Equation 11, since $\mathbf{Y}_{k-1}$ contains one less step of noise compared to $\mathbf{Y}_k$, it is closer to the original observation. 
Therefore:
\begin{equation}
    \mathbf{H}( \mathbf{X} \mid \mathbf{Y}_{k-1}, \mathbf{Y}_k ) < \mathbf{H}( \mathbf{X} \mid \mathbf{Y}_k ).
\end{equation}
Substituting this inequality back gives:
\begin{equation}
    \mathbf{H}( \mathbf{Y}_{k-1} \mid \mathbf{Y}_k, \mathbf{X} ) < \mathbf{H}( \mathbf{Y}_{k-1} \mid \mathbf{Y}_k ).
\end{equation}
Following the same reasoning, we can conclude:
\begin{equation}
    \mathbf{H}( \mathbf{Y}_{k-1} \mid \mathbf{Y}_k, \mathbf{X}, \mathbf{C}^{\mathbf{P}} ) < \mathbf{H}( \mathbf{Y}_{k-1} \mid \mathbf{Y}_k, \mathbf{X} ).
\end{equation}
This result confirms that incorporating the physiological prior $\mathbf{C}^{\mathbf{P}}$ into the conditioning set reduces the entropy of the target distribution in the reverse process. 
Consequently, this reduction in uncertainty simplifies the learning task for the diffusion model, potentially leading to more efficient training and enhanced accuracy in rPPG signal estimation.
This completes the proof.

\section{Mathematical Derivation Details}\label{appendix_derivation}
\subsection{Derivations of Closed-form Forward Process}\label{detail_Y_k}
Assuming that the clean rPPG target distribution $q(\mathbf{Y}_0)$ is known, we can first sample a clean rPPG target as $\mathbf{Y}_0 \sim q(\mathbf{Y}_0)$.
According to the forward process defined in Equation 11, the noisy rPPG signal at step $k$ can be generated as:
\begin{equation}
    \mathbf{Y}_k = \sqrt{\alpha_k} \mathbf{Y}_{k-1} + \sqrt{1 - \alpha_k} \epsilon_k, \quad \epsilon_k \sim \mathcal{N}(\mathbf{0}, \mathbf{I}).
\end{equation}
Similarly, the previous step $\mathbf{Y}{k-1}$ can be expressed as:
\begin{equation}
    \mathbf{Y}_{k-1} = \sqrt{\alpha_{k-1}} \mathbf{Y}_{k-2} + \sqrt{1 - \alpha_{k-1}} \epsilon_{k-1}, \quad \epsilon_{k-1} \sim \mathcal{N}(\mathbf{0}, \mathbf{I}).
\end{equation}
By recursively substituting, we obtain:
\begin{equation}
\begin{aligned}
    \mathbf{Y}_k &= \sqrt{\alpha_k} \left( \sqrt{\alpha_{k-1}} \mathbf{Y}_{k-2} + \sqrt{1 - \alpha_{k-1}} \epsilon_{k-1} \right)  + \sqrt{1 - \alpha_k} \epsilon_k \\
    &= \sqrt{\alpha_k \alpha_{k-1}} \mathbf{Y}_{k-2} + \left( \sqrt{\alpha_k (1 - \alpha_{k-1})} \epsilon_{k-1} + \sqrt{1 - \alpha_k} \epsilon_k \right).
\end{aligned}
\end{equation}
Given that $\epsilon_{k-1}, \epsilon_k \sim \mathcal{N}(\mathbf{0}, \mathbf{I})$, the two noise terms are independent Gaussian variables. 
Therefore, their weighted sum is also Gaussian:
\begin{equation}
\begin{aligned}
    \sqrt{\alpha_k (1 - \alpha_{k-1})} \epsilon_{k-1} &\sim \mathcal{N}(\mathbf{0}, \alpha_k (1 - \alpha_{k-1}) \mathbf{I}), \\
    \sqrt{1 - \alpha_k} \epsilon_k &\sim \mathcal{N}(\mathbf{0}, (1 - \alpha_k) \mathbf{I}),
\end{aligned}
\end{equation}
and their sum follows:
\begin{equation}
\begin{aligned}
    \sqrt{\alpha_k (1 - \alpha_{k-1})} \epsilon_{k-1} + \sqrt{1 - \alpha_k} \epsilon_k \sim \mathcal{N}\left(\mathbf{0}, \left[\alpha_k (1 - \alpha_{k-1}) + (1 - \alpha_k)\right] \mathbf{I}\right).
\end{aligned}
\end{equation}
This implies that the overall expression can be rewritten in the same form as before:
\begin{equation}
    \mathbf{Y}_k = \sqrt{\alpha_k \alpha_{k-1}} \mathbf{Y}_{k-2} + \sqrt{1 - \alpha_k \alpha_{k-1}} \epsilon,
\end{equation}
where $\epsilon \sim \mathcal{N}(\mathbf{0}, \mathbf{I})$.
Continuing this recursive process, we eventually obtain:
\begin{equation}
\begin{aligned}
    \mathbf{Y}k &= \sqrt{\prod_{s=1}^k \alpha_s} \mathbf{Y}_0 + \sqrt{1 - \prod_{s=1}^k \alpha_s} \epsilon, \quad \epsilon \sim \mathcal{N}(\mathbf{0}, \mathbf{I}).
\end{aligned}
\end{equation}
This result corresponds to the closed-form expression of $\mathbf{Y}_k$ in the forward diffusion process, starting from a clean rPPG signal $\mathbf{Y}_0$.

\subsection{Derivation of Parameterized Reverse Process}\label{appendix:reverse_parameters}
We begin with Bayes' theorem to derive the reverse process:
\begin{equation}
\begin{aligned}
     &p_{\theta}(\mathbf{Y}_{k-1}|\mathbf{Y}_k, \mathbf{X}, \mathbf{C}^{\mathbf{P}})= p_{\theta}(\mathbf{Y}_{k}|\mathbf{Y}_{k-1}, \mathbf{X}, \mathbf{C}^{\mathbf{P}}) \frac{ p_{\theta}(\mathbf{Y}_{k-1}|\mathbf{X}, \mathbf{C}^{\mathbf{P}})}{ p_{\theta}(\mathbf{Y}_{k}|\mathbf{X}, \mathbf{C}^{\mathbf{P}})}.
\end{aligned}
\end{equation}
According to Equation 11, the expected $p_{\theta}(\mathbf{Y}_{k}|\mathbf{Y}_{k-1}, \mathbf{X}, \mathbf{C}^{\mathbf{P}})$ is:
\begin{equation}
    p_{\theta}(\mathbf{Y}_{k}|\mathbf{Y}_{k-1}, \mathbf{X}, \mathbf{C}^{\mathbf{P}}) \sim \mathcal{N}(\mathbf{Y}_k; \sqrt{\alpha_k} \mathbf{Y}_{k-1}, \beta_k \mathbf{I}).
\end{equation}
Furthermore, based on Equation 13, we also have:
\begin{equation}
\begin{aligned}
    p_{\theta}(\mathbf{Y}_{k-1}|\mathbf{X}, \mathbf{C}^{\mathbf{P}}) &\sim \mathcal{N}(\mathbf{Y}_{k-1}; \sqrt{\bar{\alpha}_{k-1}} \mathbf{Y}_0, (1 - \bar{\alpha}_{k-1}) \mathbf{I}),\\
    p_{\theta}(\mathbf{Y}_{k}|\mathbf{X}, \mathbf{C}^{\mathbf{P}}) &\sim \mathcal{N}(\mathbf{Y}_{k-1}; \sqrt{\bar{\alpha}_{k}} \mathbf{Y}_0, (1 - \bar{\alpha}_{k}) \mathbf{I}).
\end{aligned}
\end{equation}
Combining the above three Gaussian distributions, we can derive:
\begin{equation}
\begin{aligned}
    p_{\theta}(\mathbf{Y}_{k-1}|\mathbf{Y}_k, \mathbf{X}, \mathbf{C}^{\mathbf{P}}) \propto  \mathcal{N}(\mathbf{Y}_k; \sqrt{\alpha_k} \mathbf{Y}_{k-1}, (1-\alpha_k) \cdot \mathcal{N}(\mathbf{Y}_{k-1}; \sqrt{\bar{\alpha}_{k-1}} \mathbf{Y}_0, (1 - \bar{\alpha}_{k-1}) \mathbf{I}).
\end{aligned}
\end{equation}
Since the product of two Gaussians is also a Gaussian, we can compute the posterior distribution analytically using the standard Gaussian product rule. 
Specifically, the reverse process becomes:
\begin{equation}
    p_{\theta}(\mathbf{Y}_{k-1}|\mathbf{Y}_k, \mathbf{X}, \mathbf{C}^{\mathbf{P}}) = \mathcal{N}(\mathbf{Y}_{k-1};\mu_\theta(\mathbf{Y}_k, \mathbf{X}, \mathbf{C}^{\mathbf{P}},k), \Sigma_{\theta}),
\end{equation}
where the mean and covariance are given by:
\begin{equation}
\begin{aligned}
    \Sigma_{\theta} &= \left( \frac{1}{\beta_k} \mathbf{I} 
    + \frac{1}{1 - \bar{\alpha}_{k-1}} \mathbf{I} \right)^{-1} 
    = \frac{(1 - \bar{\alpha}_{k-1}) \beta_k}{1 - \bar{\alpha}_k} \mathbf{I}, \\[4pt]
    \mu_\theta(\mathbf{Y}_k, \mathbf{X}, \mathbf{C}^{\mathbf{P}}, k) &= \Sigma_{\theta} \left( 
        \frac{1}{\beta_k} \sqrt{\alpha_k} \mathbf{Y}_k 
        + \frac{1}{1 - \bar{\alpha}_{k-1}} \sqrt{\bar{\alpha}_{k-1}} \mathbf{Y}_0 
    \right) \\[-3pt]
    &= \frac{\sqrt{\alpha_k}(1 - \bar{\alpha}_{k-1})}{1 - \bar{\alpha}_k} \mathbf{Y}_k 
    + \frac{\sqrt{\bar{\alpha}_{k-1}} \beta_k}{1 - \bar{\alpha}_k} \mathbf{Y}_0.
\end{aligned}
\end{equation}
However, during inference, the clean signal $\mathbf{Y}_0$ is not directly accessible. 
Therefore, we train a neural network $f_\theta(\cdot)$ to predict an approximation $\hat{\mathbf{Y}}_0$ from the noisy input:
\begin{equation}
    \hat{\mathbf{Y}}_0 = f_\theta(\mathbf{Y}_k, \mathbf{X}, \mathbf{C}^{\mathbf{P}}, k),
\end{equation}
and substitute $\mathbf{Y}_0$ in the mean computation $\mu_\theta(\mathbf{Y}_k, \mathbf{X}, \mathbf{C}^{\mathbf{P}},k)$ with the predicted $\hat{\mathbf{Y}}_0$. 
This yields the parameterized reverse process:
\begin{equation}
    \mu_\theta(\mathbf{Y}_k, \mathbf{X}, \mathbf{C}^{\mathbf{P}}, k) = \frac{\sqrt{\alpha_k}(1 - \bar{\alpha}_{k-1})}{1 - \bar{\alpha}_k} \mathbf{Y}_k + \frac{\sqrt{\bar{\alpha}_{k-1}} \beta_k}{1 - \bar{\alpha}_k} \hat{\mathbf{Y}}_0,
\end{equation}
where $\hat{\mathbf{Y}}_0 = f_\theta(\mathbf{Y}_k, \mathbf{X}, \mathbf{C}^{\mathbf{P}}, k)$.
The variance is kept fixed as:
\begin{equation}
    \Sigma_\theta = \sigma_k^2 \mathbf{I}, \quad \text{with} \quad \sigma_k^2 = \frac{(1 - \bar{\alpha}_{k-1}) \beta_k}{1 - \bar{\alpha}_k}.
\end{equation}
Therefore, this approach enables efficient learning by directly predicting the clean rPPG signal $\mathbf{Y}_0$, thereby avoiding explicit noise estimation as in the original DDPM framework~\cite{ho2020denoising}.

\begin{table}[t]
\caption{\textbf{A list of key notations and their interpretation in this work.}}
\vspace{-2ex}
\label{tab:appendix_key_notations}
\centering
\renewcommand\arraystretch{1.0}
\tabcolsep 10pt
\resizebox{0.8\linewidth}{!}{
\begin{tabular}{c|l}
\toprule
\rowcolor[HTML]{f8f9fa} \textbf{Symbol} & \textbf{Description} \\
\midrule
$\mathbf{V}$ & Facial video sequence. \\
$T$ & Temporal dimension of the input video sequence. \\
$H$ & Spatial height of the input video. \\
$W$ & Spatial width of the input video. \\
$\mathbf{X}$ & MSTmap used as model input. \\
$N$ & Facial regions of interest (ROIs). \\
$C$ & Channel dimension of the MSTmap. \\
$\mathbf{Y}$ & Ground-truth rPPG signal. \\
$\mathbf{C}^{\mathbf{P}}$ & Frequency condition. \\
$D$ & $D$-dimensional latent space. \\
$\mathbf{Z}^{(l)}$ & Intermediate feature of $l$-th layer denoising network. \\
\textbf{CRL} & Cross-domain Representation Learning module.\\
\textbf{PFD} & Physiological Frequency Denoiser module.\\
\textbf{PBF} & Physiological Bandpass Filter module.\\
\textbf{PSM} & Physiological Spectrum Modulation module.\\
\textbf{ASS} & Adaptive Spectrum Selection module.\\
$\mathbb{R}$ & Real number field.\\
$\mathbb{C}$ & Complex number field.\\
$\mathcal{F}(\cdot)$ & Discrete Fourier Transform.\\
$\tau$ & Data‐driven threshold in the \textbf{ASS} module. \\
$\mathrm{CA}(\cdot,\cdot)$ & Cross-Attention.\\
$\mathbf{Y}_0$ & Clean rPPG signal.\\
$\hat{\mathbf{Y}}_0$ & Predicted rPPG signal.\\
\bottomrule
\end{tabular}}
\end{table}

\section{Model Details}\label{appendix_model}
\subsection{Key Notations and Their Interpretation}
As summarized in Tab.~\ref{tab:appendix_key_notations}, we provide a clear overview of the primary symbols and variables used throughout this work.

\begin{figure*}[t]
\centering
\includegraphics[width=1.0\linewidth]{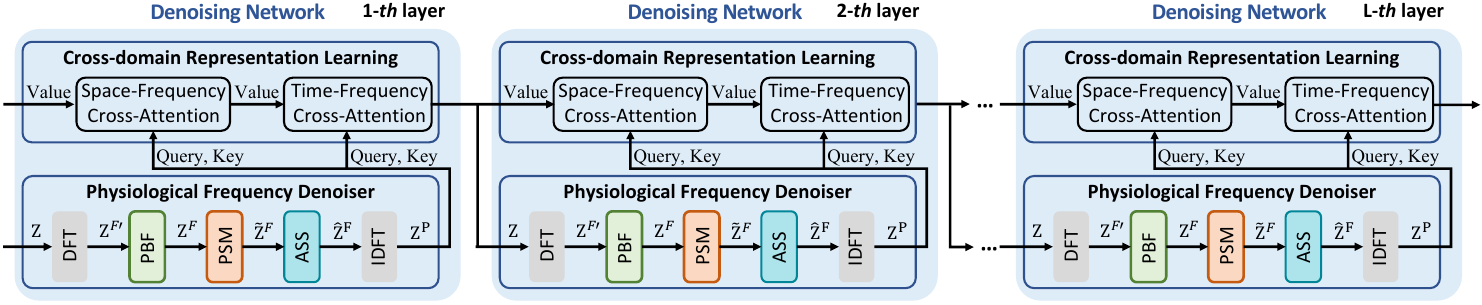}
\caption{Implementation details of the stacked denoising network in our \textcolor{SkyBlue}{\textbf{FreqPhys}}.}
\label{fig:detail_arc}
\end{figure*}

\subsection{Detailed Architecture of the Denoising Network}
As shown in Fig.~\ref{fig:detail_arc}, our denoising network is constructed by stacking $L$ sequential layers, each containing two major components: the \textit{Physiological Frequency Denoiser} (PFD) and the \textit{Cross-domain Representation Learning} (CRL) module.
The PFD serves as the first operation in each layer. Given an intermediate representation $\mathbf{Z}$, we first apply the Discrete Fourier transform (DFT) to obtain its frequency-domain representation. The transformed spectrum then undergoes three physiology-driven processes: the \textit{Physiological Bandpass Filter} (PBF) for removing out-of-band noise, the \textit{Physiological Spectrum Modulation} (PSM) for emphasizing cardiac-related harmonics, and the \textit{Adaptive Spectrum Selection} (ASS) for suppressing residual physiological interference. The refined spectrum is then converted back to the temporal domain via the inverse DFT, producing the enhanced feature $\mathbf{Z}^{\mathbf{P}}$.
Following the PFD, the CRL module integrates the cleaned physiological clues with spatial and temporal features. It consists of two cross-attention blocks: the \textit{Space-Frequency Cross-Attention}, which connects spatial features with frequency priors, and the \textit{Time-Frequency Cross-Attention}, which injects long-range temporal dependencies under frequency-aware guidance. This dual-path fusion enables the network to jointly exploit spectral regularities and temporal dynamics.
By repeating this structure over $L$ layers, the denoising network progressively removes noise and enhances physiological clues. Each layer first filters the signal with physiological priors and then enriches it via cross-domain attention, enabling iterative recovery of high-fidelity rPPG signals.

\subsection{Cross-domain Representation Learning}
\textbf{Space-Frequency Cross-Attention.} To capture spatial dependencies across facial ROIs guided by physiological frequency clues, we apply a multi-head cross-attention over the ROI dimension at each timestamp.
Assuming the inputs of $l$-th layer are the intermediate feature $\mathbf{Z}^{(l)} \in\mathbb{R}^{T\times N\times D}$ and physiological frequency representation $\mathbf{Z}^{\mathbf{P},(l)} \in\mathbb{R}^{T\times N\times D}$.
Then, for each timestamp $t$, the process of space-frequency interaction learning is formulated as:
\begin{equation}
    \begin{gathered}
        Q_{t}=\mathbf{Z}^{\mathbf{P},(l)}_{t}W^{Q}_{S}\in\mathbb{R}^{N\times D}, \;\; K_{t}=\mathbf{Z}^{\mathbf{P},(l)}_{t}W^{K}_{S}\in\mathbb{R}^{N\times D},  \;\; V_{t}=\mathbf{Z}^{(l)}_{t}W^{V}_{S}\in\mathbb{R}^{N\times D},\\
        \mathbf{Z}^{\mathbf{S},(l)}_{t} = \mathrm{LayerNorm}\left(\mathrm{Softmax}\left(\frac{Q_{t}K_{t}^{\top}}{\sqrt{D}} \right)V_{t}+\mathbf{Z}^{(l)}_{t})\right)\in\mathbb{R}^{N\times D}, \\
        \hat{\mathbf{Z}}^{\mathbf{S},(l)}_{t} = \mathrm{LayerNorm}\left(\mathbf{Z}^{\mathbf{S},(l)}_{t} + \mathrm{FeedForward}\left(\mathbf{Z}^{\mathbf{S},(l)}_{t} \right)\right)\in\mathbb{R}^{N\times D},
    \end{gathered}
\end{equation}
where $W^Q_S, W^K_S, W^V_S \in \mathbb{R}^{D \times D}$ are learnable projection matrices.
Finally, the outputs $\hat{\mathbf{Z}}^{\mathbf{S},(l)}$ at all timestamps are concatenated along the temporal dimension to obtain the updated intermediate feature:
\begin{equation}
    \mathbf{Z}^{\mathbf{S},(l)}\in\mathbb{R}^{T \times N\times D} \leftarrow \mathrm{Concat}\left(\{\hat{\mathbf{Z}}^{\mathbf{S},(l)}_{t}\}_{t=1}^T\right).
\end{equation}

\noindent\textbf{Time-Frequency Cross-Attention.} Complementary to space-frequency cross-attention, time-frequency cross-attention further models the temporal periodic dependencies within individual ROIs through frequency-guided cross-attention along the time axis.
For each ROI $n$, it can be formulated as:
\begin{equation}
    \begin{gathered}
        Q_{n}=\mathbf{Z}^{\mathbf{P},(l)}_{n}W^{Q}_{T}\in\mathbb{R}^{T\times D}, \;\;
        K_{n}=\mathbf{Z}^{\mathbf{P},(l)}_{n}W^{K}_{T}\in\mathbb{R}^{T\times D}, \;\;
        V_{n}=\mathbf{Z}^{\mathbf{S},(l)}_{n}W^{V}_{T}\in\mathbb{R}^{T\times D}, \\
        \mathbf{Z}^{\mathbf{T},(l)}_{n} =
        \mathrm{LayerNorm}\left(
        \mathrm{Softmax}\left(\frac{Q_{n}K_{n}^{\top}}{\sqrt{D}} \right)V_{n}
        +\mathbf{Z}^{\mathbf{S},(l)}_{n}
        \right)\in\mathbb{R}^{T\times D}, \\
        \hat{\mathbf{Z}}^{\mathbf{T},(l)}_{n} =
        \mathrm{LayerNorm}\left(
        \mathbf{Z}^{\mathbf{T},(l)}_{n} +
        \mathrm{FeedForward}\left(\mathbf{Z}^{\mathbf{T},(l)}_{n}\right)
        \right)\in\mathbb{R}^{T\times D},
    \end{gathered}
\end{equation}
where $W^Q_T, W^K_T, W^V_T \in \mathbb{R}^{D \times D}$ are independent learnable projection matrices.
Finally, the outputs $\hat{\mathbf{Z}}^{\mathbf{T},(l)}_{t}$ for all ROIs are concatenated along the spatial dimension to update the intermediate feature:
\begin{equation}
    \mathbf{Z}^{(l+1)}\in\mathbb{R}^{N\times T \times D} \leftarrow \mathrm{Concat}\left(\{\hat{\mathbf{Z}}^{\mathbf{T},(l)}_{n}\}_{n=1}^N\right).
\end{equation}

\section{Reproduction Details}\label{appendix_reproduction}
\subsection{Datasets Details} \label{details_datasets}
We conduct extensive experiments on six widely used datasets for remote physiological measurement. 
Representative samples from these datasets are illustrated in Fig.~\ref{fig:appendix_dataset}. 
Detailed descriptions of these datasets are as follows:
\begin{enumerate}
\item [(1)] {UBFC-rPPG}~\cite{bobbia2019unsupervised} is a small-scale yet widely used dataset consisting of 42 facial videos from 42 subjects. 
The videos are of high quality with minimal noise or motion artifacts, making UBFC-rPPG an ideal benchmark for evaluating model accuracy under relatively clean and controlled conditions.

\item [(2)] {PURE}~\cite{stricker2014non} is another small-scale dataset designed for testing under controlled motion conditions. 
It comprises 59 one-minute videos from 10 subjects, each participating in six scenarios: 1) sitting still, 2) talking, 3) slow head movement, 4) fast head movement, 5) small head rotation, and 6) medium head rotation.

\item [(3)] {BUAA}~\cite{xi2020image} is constructed to evaluate the robustness of rPPG models under different illumination conditions. 
The dataset contains facial video sequences captured under controlled lighting environments spanning a wide intensity range, from extremely dim scenes (below 10 lux) to normally illuminated settings. 
In our experiments, we follow~\cite{xie2025physllm,zhao2025phase} and only use videos recorded under illumination levels of at least 10 lux. 
Sequences captured in extremely low-light conditions are excluded, since severe image degradation in such scenarios typically requires dedicated enhancement techniques that fall outside the scope of this study.

\item[(4)] MMPD~\cite{tang2023mmpd} is a challenging medium-scale rPPG dataset collected entirely using smartphone cameras. 
It contains 660 one-minute facial videos from 33 subjects. 
To evaluate algorithm robustness under diverse conditions, the dataset includes four skin tone categories, four illumination settings (LED-high, LED-low, incandescent, and natural light), and multiple motion scenarios such as resting, head rotation, conversation, and walking. 
Additionally, MMPD introduces dedicated experiments to analyze motion effects in otherwise static scenes, where subjects first perform high-intensity activities (e.g., high knee lifts) to elevate heart rate before recording.

\begin{figure}[t]
\centering
\includegraphics[width=1.0\linewidth]{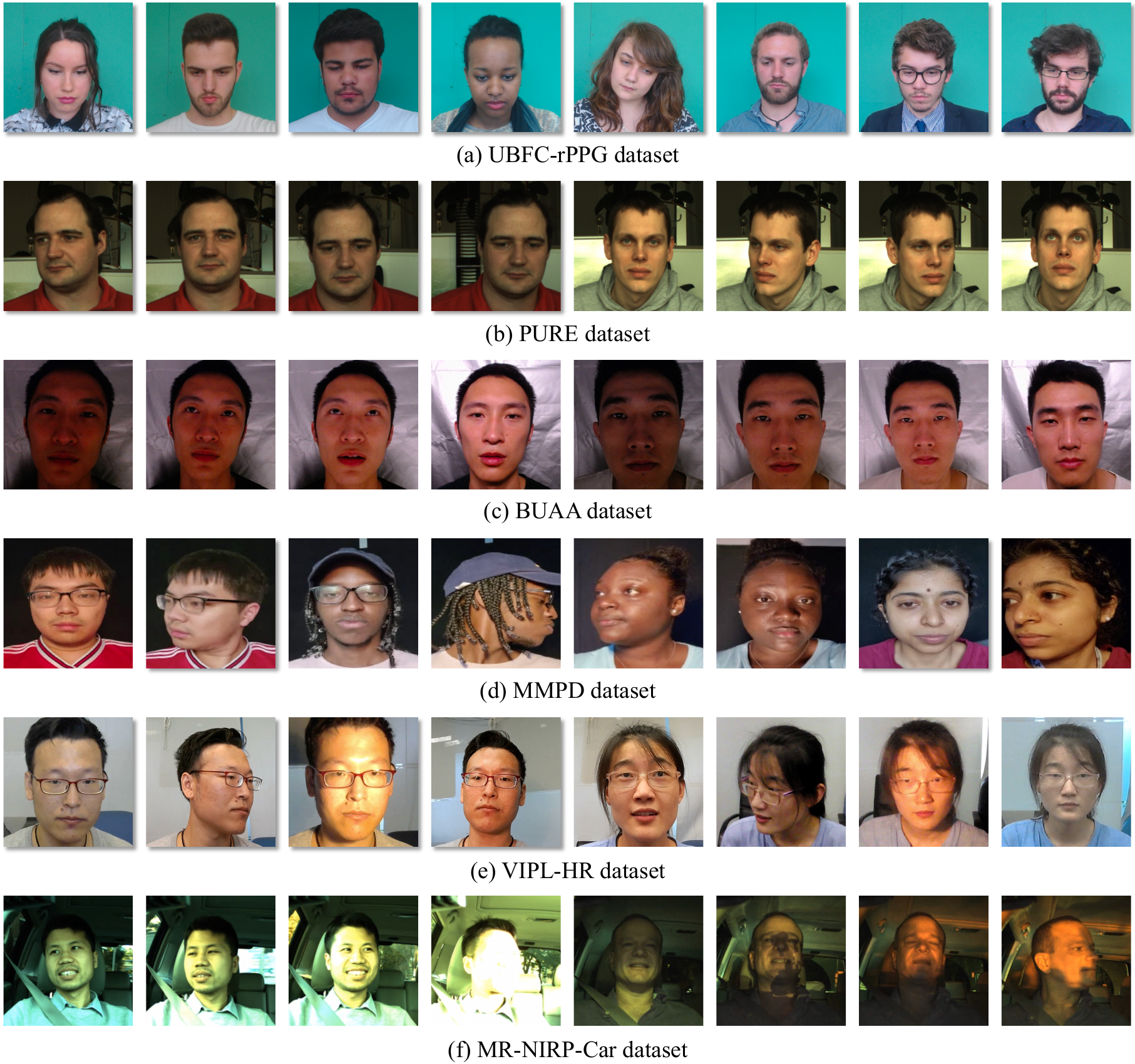}
\caption{Typical samples from six different remote physiological measurement datasets.} 
\label{fig:appendix_dataset}
\end{figure}

\item [(5)] {VIPL-HR}~\cite{niu2019rhythmnet} is a large-scale and highly challenging rPPG benchmark consisting of 2,378 facial RGB videos collected from 107 subjects. 
The videos are captured using three different devices, including a web camera, a smartphone front-facing camera, and an RGB-D camera. 
Furthermore, the dataset defines nine recording scenarios that combine diverse illumination conditions (e.g., bright and dark environments) with different levels of head motion (e.g., stable, talking, and large movements).  
Such variations in capture devices, lighting conditions, and motion patterns make VIPL-HR considerably more challenging and are widely used to evaluate the robustness and generalization ability of rPPG methods.

\item [(6)] {MR-NIRP-Car}~\cite{nowara2020near} is the first real-world rPPG benchmark collected in driving scenarios, designed to evaluate physiological measurement methods beyond controlled laboratory environments. 
The dataset contains 185 RGB videos recorded from 19 subjects while driving or sitting inside a parked car. 
During recording, subjects also performed natural activities such as speaking and random head movements, introducing realistic motion variations. 
Compared with laboratory datasets, MR-NIRP-Car presents substantially greater challenges due to extreme illumination fluctuations, day–night transitions, and diverse outdoor environmental conditions.
\end{enumerate}

Notably, \textbf{all dataset splits strictly follow the widely used benchmark protocols}~\cite{zou2025rhythmformer,zou2025rhythmmamba,qian2025physdiff,zhao2025phase,xie2025physllm}, \textbf{and all evaluations are conducted in a subject-independent manner to avoid identity leakage.} 
Specifically, for UBFC-rPPG, subjects 38 to 49 are used as the test set, while the remaining subjects are used for training. 
For PURE and BUAA, the datasets are sequentially divided by subject identity into training and testing sets with ratios of 6:4 and 7:3, respectively. 
For VIPL-HR, we adopt a subject-exclusive 5-fold cross-validation protocol. 
For MMPD and MR-NIRP-Car, the datasets are sequentially divided by subject identity into training, validation, and testing sets with a ratio of 7:1:2.

\subsection{Evaluation Metrics}\label{metric}
Following the common evaluation protocol in prior work~\cite{zou2025rhythmmamba,zou2025rhythmformer,qian2025physdiff,zhao2025phase,xie2025physllm}, we report four standard metrics for HR and HRV estimation: {mean absolute error (MAE)}, {root mean square error (RMSE)}, {Pearson’s correlation coefficient ($r$ or $\rho$)}, and {Standard Deviation (SD)}. 
The {MAE}, {RMSE}, and {SD} are measured in beats per minute (bpm), where lower values indicate better estimation accuracy, while a higher $r$ or $\rho$ reflects stronger correlation between predictions and ground truth.
Let $\mathbf{Y}_{pred}$ denote the predicted signal, $\mathbf{Y}_{gt}$ denote the ground truth signal, and $N$ be the total number of evaluation instances. The definitions of the adopted metrics are as follows:

\textbf{Mean Absolute Error (MAE):} It measures the average magnitude of the absolute differences between predicted and ground truth values, reflecting the overall prediction error without considering its direction.
\begin{equation}
    \mathbf{MAE} = \frac{1}{N} \sum_{n=1}^{N} \left| \mathbf{Y}^{n}_{gt} - \mathbf{Y}^{n}_{pred} \right|.
\label{eq:mae}
\end{equation}

\textbf{Root Mean Square Error (RMSE):} It evaluates the square root of the mean squared error, emphasizing larger errors due to the squaring operation, and is more sensitive to outliers than MAE.
\begin{equation}
    \mathbf{RMSE} = \sqrt{ \frac{1}{N} \sum_{n=1}^{N} \left( \mathbf{Y}^{n}_{gt} - \mathbf{Y}^{n}_{pred} \right)^2 }.
\label{eq:rmse}
\end{equation}

\textbf{Standard Deviation (SD):} It quantifies the dispersion of the prediction errors around their mean, providing insight into the consistency and reliability of the predictions.
\begin{equation}
    \mathbf{SD} = \sqrt{ \frac{1}{N} \sum_{n=1}^{N} \left( \mathbf{Y}^{n}_{e} - \overline{Y}_{e} \right)^2 },
\label{eq:sd}
\end{equation}
where the error term is defined as $\mathbf{Y}^{n}_{e} = \mathbf{Y}^{n}_{pred} - \mathbf{Y}^{n}_{gt}$, and $\overline{Y}_{e}$ denotes the mean error across all $N$ samples.

\textbf{Pearson’s Correlation Coefficient ($r$ or $\rho$):} It assesses the linear relationship between predicted and ground truth values, with higher values indicating stronger correlation and better temporal alignment.
\begin{equation}
    \bm{r(\rho)} = \frac{ \sum_{n=1}^{N} (\mathbf{Y}_{gt}^{n} - \overline{\mathbf{Y}}_{gt}) (\mathbf{Y}_{pred}^{n} - \overline{\mathbf{Y}}_{pred}) }{ \sqrt{ \sum_{n=1}^{N} (\mathbf{Y}_{gt}^{n} - \overline{\mathbf{Y}}_{gt})^2 \sum_{n=1}^{N} (\mathbf{Y}_{pred}^{n} - \overline{\mathbf{Y}}_{pred})^2 } },
\label{eq:pearson}
\end{equation}
where $\overline{\mathbf{Y}}_{gt}$ and $\overline{\mathbf{Y}}_{pred}$ are the sample means of the ground truth and predicted signals, respectively.

\section{Additional Experimental Results}\label{appendix_experiments}

\subsection{Domain Generalization Evaluation from Laboratory Scenarios to Real-world Scenarios}
We evaluate cross-domain generalization from laboratory datasets to the real-world driving dataset MR-NIRP-Car. Specifically, UBFC-rPPG (U), MMPD (M), and VIPL-HR (V) are used as source domains, while MR-NIRP-Car (C) serves as the unseen target domain. UBFC-rPPG is relatively stable and controlled, whereas MMPD and VIPL-HR contain more diverse motions and illumination conditions. MR-NIRP-Car represents the most challenging scenario with severe head movements and illumination fluctuations during driving.
As shown in Tab.~\ref{tab:cross-dataset_car}, traditional rPPG methods perform poorly in this setting, with average MAE values above 9 bpm. Deep learning methods improve performance but still suffer from significant degradation under domain shift. For example, EfficientPhys achieves an average RMSE of 12.87, while RhythmMamba reaches 14.48 due to unstable cross-domain behavior.

Interestingly, although MMPD and VIPL-HR contain substantially richer motion patterns and illumination variations than the relatively controlled UBFC-rPPG dataset, several deep learning methods do not benefit from this increased source-domain complexity. In fact, their cross-domain performance further deteriorates when trained on these more challenging datasets. For instance, RhythmMamba achieves an RMSE of 11.48 under U$\rightarrow$C but degrades dramatically to 18.92 under M$\rightarrow$C. EfficientPhys also increases from an RMSE of 11.09 to 13.00 under the same setting. This observation suggests that simply introducing more diverse source-domain data does not necessarily improve domain generalization, and many existing approaches struggle to learn domain-invariant physiological representations under large domain shifts.
In contrast, our \textcolor{SkyBlue}{\textbf{FreqPhys}} consistently achieves the best results across all settings. Specifically, it achieves RMSE values of 10.33, 9.62, and 9.22 under U$\rightarrow$C, M$\rightarrow$C, and V$\rightarrow$C, respectively, outperforming the SOTA EfficientPhys in every case. On average, \textcolor{SkyBlue}{\textbf{FreqPhys}} reduces RMSE from 12.87 to 9.72 (a 24.5\% improvement) while also improving the $r$ from 0.29 to 0.58. These results demonstrate that explicitly leveraging physiological frequency priors enables the model to better suppress motion-induced spectral distortions and illumination-related noise, leading to significantly improved generalization from controlled laboratory environments to complex real-world driving scenarios.

\begin{table*}[t]
\caption{Cross-domain generalization evaluation results on the MR-NIRP-Car dataset. U=UBFC-rPPG, M=MMPD, V=VIPL-HR, C=MR-NIRP-Car. \textbf{Bold}: best results.}
\vspace{-2.5ex}
\label{tab:cross-dataset_car}
\centering
\renewcommand\arraystretch{1.1}
\tabcolsep 1pt
\resizebox{1.0\linewidth}{!}{
\begin{tabular}{cc|ccc|ccc|ccc|ccc}
\toprule\toprule
\rowcolor[HTML]{f8f9fa} & & \multicolumn{3}{c|}{\textbf{U $\rightarrow$ C}} & \multicolumn{3}{c|}{\textbf{M $\rightarrow$ C}} & \multicolumn{3}{c|}{\textbf{V $\rightarrow$ C}} & \multicolumn{3}{c}{\textbf{Average}}\\ 
\rowcolor[HTML]{f8f9fa} \multirow{-2}{*}{\textbf{Method}} & \multirow{-2}{*}{\textbf{Venue}} & MAE$\downarrow$ & RMSE$\downarrow$ & $r\uparrow$ & MAE$\downarrow$ & RMSE$\downarrow$ & $r\uparrow$ & MAE$\downarrow$ & RMSE$\downarrow$ & $r\uparrow$ & MAE$\downarrow$ & RMSE$\downarrow$ & $r\uparrow$ \\ \midrule
\multicolumn{14}{l}{\emph{\textcolor{gray}{Traditional rPPG Methods}}}\\
GREEN~\cite{verkruysse2008remote} & Opt Express'08 & 11.01  & 14.60 & 0.19 & 11.01  & 14.60 & 0.19 & 11.01  & 14.60 & 0.19 & 11.01  & 14.60 & 0.19 \\
ICA~\cite{poh2010non} & Opt Express'10 & 10.07 & 13.89 & 0.28	& 10.07 & 13.89 & 0.28 & 10.07 & 13.89 & 0.28	& 10.07 & 13.89 & 0.28\\
CHROM~\cite{de2013robust} & TBE'13 & 9.70 & 13.71 & 0.28 & 9.70 & 13.71 & 0.28 & 9.70 & 13.71 & 0.28 & 9.70 & 13.71 & 0.28\\ 
POS~\cite{wang2016algorithmic} & TBE'16 & 9.32 & 13.56 & 0.21 & 9.32 & 13.56 & 0.21 & 9.32 & 13.56 & 0.21 & 9.32 & 13.56 & 0.21\\ 
\midrule
\multicolumn{14}{l}{\emph{\textcolor{gray}{Deep Learning-based rPPG Methods}}}\\
PhysNet~\cite{yu2019remote1} & BMVC'19 & 14.51 & 20.63 & 0.16 & 13.63 & 16.86 & 0.13 & 10.37 & 15.05 & 0.16 & 12.84 & 17.51 & 0.15 \\
PhysFormer~\cite{yu2022physformer} & CVPR'22 & 10.95 & 13.51 & 0.02 & 11.29 & 13.81 & 0.10 & 12.32 & 14.26 & 0.01 & 11.52 & 13.86 & 0.04\\ 
EfficientPhys~\cite{liu2023efficientphys} & WACV'23 & 6.80 & 11.09 & 0.48 & 7.69 & 13.00 & 0.39 & 11.69 & 14.53 & 0.01 & 8.73 & 12.87 & 0.29\\
RhythmMamba~\cite{zou2025rhythmmamba} & AAAI'25 & 6.71 & 11.48 & 0.48 & 15.99 & 18.92 & -0.21 & 10.05 & 13.03 & 0.14 & 10.92 & 14.48 & 0.14\\
\rowcolor[HTML]{EBFEE8} \textcolor{SkyBlue}{\textbf{FreqPhys}} \textbf{(Ours)} & - & {\textbf{6.69}} & {\textbf{10.33}} & {\textbf{0.53}} & {\textbf{6.46}} & {\textbf{9.62}} & {\textbf{0.59}} & {\textbf{5.97}} & {\textbf{9.22}} & {\textbf{0.62}} & {\textbf{6.37}} & {\textbf{9.72}} & {\textbf{0.58}}\\
\bottomrule\bottomrule
\end{tabular}}
\end{table*}

\subsection{Dual-source Domain Generalization Evaluation}
Following~\cite{zhao2025phase}, we further evaluate domain generalization under the dual-source setting, where models are trained on two datasets and tested on an unseen target domain. Specifically, we consider BUAA and MMPD as target datasets. BUAA contains videos recorded under a wide range of illumination intensities, while MMPD includes complex motion patterns, diverse lighting conditions, and large subject variations, making both datasets challenging benchmarks for cross-domain evaluation.

As shown in Tab.~\ref{tab:dual_DG_buaa}, several deep learning methods exhibit unstable cross-domain results when evaluated on the BUAA dataset. Interestingly, some methods even perform worse than traditional signal-processing methods under this challenging illumination setting. For example, PhysNet and PhysFormer obtain RMSE values of 21.48 and 22.17 under the P+U$\rightarrow$B setting, which are significantly worse than classical methods such as POS (RMSE 7.12) and CHROM (RMSE 8.29). Even strong deep learning baselines such as RhythmFormer achieve an average RMSE of 8.15. In contrast, the recent PHASE-Net reduces the average RMSE to 5.21. Our \textcolor{SkyBlue}{\textbf{FreqPhys}} further achieves the best performance across all settings, with RMSE values of 2.81, 2.58, and 2.77 under P+U$\rightarrow$B, P+M$\rightarrow$B, and U+M$\rightarrow$B, respectively. On average, our method reduces RMSE from 5.21 to 2.72 (a 47.8\% improvement) compared with PHASE-Net.
When evaluated on the more challenging MMPD dataset, as shoun in Tab.~\ref{tab:dual_DG_mmpd}), our method again achieves the best performance. Specifically, \textcolor{SkyBlue}{\textbf{FreqPhys}} achieves RMSE values of 14.91, 15.60, and 14.61 under P+U$\rightarrow$M, P+B$\rightarrow$M, and U+B$\rightarrow$M, respectively, outperforming the SOTA PHASE-Net (16.07, 15.96, and 17.47). These results demonstrate that explicitly modeling physiological frequency priors enables more robust domain-invariant representations, leading to improved generalization under complex illumination and motion variations.

\begin{table*}[t]
\caption{Cross-domain generalization evaluation results on the BUAA dataset, with dual-source training and single-target testing. U=UBFC-rPPG, P=PURE, B=BUAA, and M=MMPD. \textbf{Bold}: best results.}
\vspace{-2.5ex}
\label{tab:dual_DG_buaa}
\centering
\renewcommand\arraystretch{1.1}
\tabcolsep 1pt
\resizebox{1.0\linewidth}{!}{
\begin{tabular}{cc|ccc|ccc|ccc|ccc}
\toprule\toprule
\rowcolor[HTML]{f8f9fa} & & \multicolumn{3}{c|}{\textbf{P+U $\rightarrow$ B}} & \multicolumn{3}{c|}{\textbf{P+M $\rightarrow$ B}} & \multicolumn{3}{c|}{\textbf{U+M $\rightarrow$ B}} & \multicolumn{3}{c}{\textbf{Average}}\\ 
\rowcolor[HTML]{f8f9fa} \multirow{-2}{*}{\textbf{Method}} & \multirow{-2}{*}{\textbf{Venue}} & MAE$\downarrow$ & RMSE$\downarrow$ & $r\uparrow$ & MAE$\downarrow$ & RMSE$\downarrow$ & $r\uparrow$ & MAE$\downarrow$ & RMSE$\downarrow$ & $r\uparrow$ & MAE$\downarrow$ & RMSE$\downarrow$ & $r\uparrow$ \\ \midrule
\multicolumn{14}{l}{\emph{\textcolor{gray}{Traditional rPPG Methods}}}\\
GREEN~\cite{verkruysse2008remote} & Opt Express'08 & 6.89  & 10.39 & 0.60 & 6.89  & 10.39 & 0.60 & 6.89  & 10.39 & 0.60 & 6.89  & 10.39 & 0.60\\
CHROM~\cite{de2013robust} & TBE'13 & 6.09 & 8.29 & 0.51 & 6.09 & 8.29 & 0.51 & 6.09 & 8.29 & 0.51 & 6.09 & 8.29 & 0.51\\ 
POS~\cite{wang2016algorithmic} & TBE'16 & 5.04 & 7.12 & 0.63 & 5.04 & 7.12 & 0.63 & 5.04 & 7.12 & 0.63 & 5.04 & 7.12 & 0.63\\ 
\midrule
\multicolumn{14}{l}{\emph{\textcolor{gray}{Deep Learning-based rPPG Methods}}}\\
PhysNet~\cite{yu2019remote1} & BMVC'19 & 15.34 & 21.48 & -0.29 & 20.97 & 24.75 & 0.01 & 11.40 & 16.72 & 0.14 & 15.90 & 20.98 & -0.05 \\
PhysFormer~\cite{yu2022physformer} & CVPR'22 & 18.23 & 22.17 & 0.07 & 14.86 & 18.26 & 0.03 & 10.87 & 16.20 & 0.08 & 14.65 & 18.88 & 0.06\\ 
EfficientPhys~\cite{liu2023efficientphys} & WACV'23 & 4.60 & 8.06 & 0.72 & 4.15 & 7.14 & 0.77 & 3.00 & 5.18 & 0.89 & 3.92 & 6.79 & 0.79\\
RhythmFormer~\cite{zou2025rhythmformer} & PR'25 & 3.90 & 6.51 & 0.82 & 4.32 & 6.70 & 0.82 & 6.20 & 11.23 & 0.49 & 4.81 & 8.15 & 0.71\\
PHASE-Net~\cite{zhao2025phase} & CVPR'26 & 2.91 & 4.23 & 0.92 & 4.03 & 6.21 & 0.85 & 3.51 & 5.18 & 0.89 & 3.48 & 5.21 & 0.89\\
\rowcolor[HTML]{EBFEE8} \textcolor{SkyBlue}{\textbf{FreqPhys}} \textbf{(Ours)} & - & {\textbf{2.26}} & {\textbf{2.81}} & {\textbf{0.99}} & {\textbf{2.06}} & {\textbf{2.58}} & {\textbf{0.97}} & {\textbf{2.12}} & {\textbf{2.77}} & {\textbf{0.96}} & {\textbf{2.15}} & {\textbf{2.72}} & {\textbf{0.97}}\\
\bottomrule\bottomrule
\end{tabular}}
\end{table*}

\begin{table*}[t]
\caption{Cross-domain generalization evaluation results on the MMPD dataset, with dual-source training and single-target testing. U=UBFC-rPPG, P=PURE, B=BUAA, and M=MMPD. \textbf{Bold}: best results.}
\vspace{-2.5ex}
\label{tab:dual_DG_mmpd}
\centering
\renewcommand\arraystretch{1.1}
\tabcolsep 1pt
\resizebox{1.0\linewidth}{!}{
\begin{tabular}{cc|ccc|ccc|ccc|ccc}
\toprule\toprule
\rowcolor[HTML]{f8f9fa} & & \multicolumn{3}{c|}{\textbf{P+U $\rightarrow$ M}} & \multicolumn{3}{c|}{\textbf{P+B $\rightarrow$ M}} & \multicolumn{3}{c|}{\textbf{U+B $\rightarrow$ M}} & \multicolumn{3}{c}{\textbf{Average}}\\ 
\rowcolor[HTML]{f8f9fa} \multirow{-2}{*}{\textbf{Method}} & \multirow{-2}{*}{\textbf{Venue}} & MAE$\downarrow$ & RMSE$\downarrow$ & $r\uparrow$ & MAE$\downarrow$ & RMSE$\downarrow$ & $r\uparrow$ & MAE$\downarrow$ & RMSE$\downarrow$ & $r\uparrow$ & MAE$\downarrow$ & RMSE$\downarrow$ & $r\uparrow$ \\ \midrule
\multicolumn{14}{l}{\emph{\textcolor{gray}{Traditional rPPG Methods}}}\\
GREEN~\cite{verkruysse2008remote} & Opt Express'08 & 21.68  & 27.69 & -0.01 & 21.68  & 27.69 & -0.01 & 21.68  & 27.69 & -0.01 & 21.68  & 27.69 & -0.01\\
CHROM~\cite{de2013robust} & TBE'13 & 13.66 & 18.76 & 0.08 &13.66	& 18.76 & 0.08 & 13.66 & 18.76 & 0.08 &13.66	& 18.76 & 0.08\\ 
POS~\cite{wang2016algorithmic} & TBE'16 & 12.36 & 17.71 & 0.18 & 12.36 & 17.71 & 0.18 & 12.36 & 17.71 & 0.18 & 12.36 & 17.71 & 0.18\\ 
\midrule
\multicolumn{14}{l}{\emph{\textcolor{gray}{Deep Learning-based rPPG Methods}}}\\
PhysNet~\cite{yu2019remote1} & BMVC'19 & 11.00 & 17.30 & 0.28 & 13.20 & 16.70 & 0.23 & 13.50 & 17.00 & 0.09 & 12.57 & 17.00 & 0.20 \\
PhysFormer~\cite{yu2022physformer} & CVPR'22 & 11.40 & 17.50 & 0.23 & 13.90 & 18.60 & 0.21 & 13.20 & 16.50 & 0.12 & 12.83 & 17.53 & 0.19\\ 
EfficientPhys~\cite{liu2023efficientphys} & WACV'23 & 11.80 & 18.90 & 0.22 & 11.90 & 18.50 & 0.21 & 15.50 & 20.80 & 0.03 & 13.07 & 19.40 & 0.15\\
RhythmFormer~\cite{zou2025rhythmformer} & PR'25 & 10.50 & 16.72 & 0.28 & 13.98 & 19.46 & 0.12 & 12.57 & 17.45 & 0.15 & 12.35 & 17.88 & 0.18\\
PHASE-Net~\cite{zhao2025phase} & CVPR'26 & 9.76 & 16.07 & 0.39 & 11.38 & 15.96 & 0.30 & 11.84 & 17.47 & 0.15 & 10.99 & 16.50 & 0.28\\
\rowcolor[HTML]{EBFEE8} \textcolor{SkyBlue}{\textbf{FreqPhys}} \textbf{(Ours)} & - & {\textbf{9.27}} & {\textbf{14.91}} & {\textbf{0.50}} & {\textbf{11.30}} & {\textbf{15.60}} & {\textbf{0.38}} & {\textbf{10.20}} & {\textbf{14.61}} & {\textbf{0.47}} & {\textbf{10.26}} & {\textbf{15.04}} & {\textbf{0.45}}\\
\bottomrule\bottomrule
\end{tabular}}
\end{table*}

\begin{table*}[t]
\caption{Cross-domain generalization evaluation results on the UBFC-rPPG/PURE/BUAA/MMPD datasets, with three-source training and single-target testing. U=UBFC-rPPG, P=PURE, B=BUAA, and M=MMPD. \textbf{Bold}: best results.}
\vspace{-2.5ex}
\label{tab:three_DG}
\centering
\renewcommand\arraystretch{1.1}
\tabcolsep 1pt
\resizebox{1.0\linewidth}{!}{
\begin{tabular}{cc|ccc|ccc|ccc|ccc}
\toprule\toprule
\rowcolor[HTML]{f8f9fa} & & \multicolumn{3}{c|}{\textbf{P+B+M $\rightarrow$ U}} & \multicolumn{3}{c|}{\textbf{U+B+M $\rightarrow$ P}} & \multicolumn{3}{c|}{\textbf{P+U+M $\rightarrow$ B}} & \multicolumn{3}{c}{\textbf{P+U+B $\rightarrow$ M}}\\ 
\rowcolor[HTML]{f8f9fa} \multirow{-2}{*}{\textbf{Method}} & \multirow{-2}{*}{\textbf{Venue}} & MAE$\downarrow$ & RMSE$\downarrow$ & $r\uparrow$ & MAE$\downarrow$ & RMSE$\downarrow$ & $r\uparrow$ & MAE$\downarrow$ & RMSE$\downarrow$ & $r\uparrow$ & MAE$\downarrow$ & RMSE$\downarrow$ & $r\uparrow$ \\ \midrule
\multicolumn{14}{l}{\emph{\textcolor{gray}{Traditional rPPG Methods}}}\\
GREEN~\cite{verkruysse2008remote} & Opt Express'08 & 19.73 & 31.00 & 0.37 & 10.09 & 23.85 & 0.34 & 6.89 & 10.39 & 0.60 & 21.68 & 27.69 & -0.01\\
CHROM~\cite{de2013robust} & TBE'13 & 7.23 & 8.92 & 0.51 & 9.79 & 12.76 & 0.37 & 6.09 & 8.29 & 0.51 & 13.66 & 18.76 & 0.08\\
POS~\cite{wang2016algorithmic} & TBE'16 & 7.35 & 8.04 & 0.49 & 9.82 & 13.44 & 0.34 & 5.04 & 7.12 & 0.63 & 12.36 & 17.71 & 0.18\\ 
\midrule
\multicolumn{14}{l}{\emph{\textcolor{gray}{Deep Learning-based rPPG Methods}}}\\
PhysNet~\cite{yu2019remote1} & BMVC'19 & 13.83 & 23.66 & 0.35 & 33.23 & 35.25 & -0.15 & 12.75 & 16.37 & 0.08 & 13.37 & 16.64 & 0.29 \\
PhysFormer~\cite{yu2022physformer} & CVPR'22 & 10.29 & 18.13 & 0.60 & 19.75 & 24.30 & 0.24 & 22.09 & 26.21 & 0.03 & 13.90 & 19.30 & 0.06\\ 
EfficientPhys~\cite{liu2023efficientphys} & WACV'23 & 12.87 & 18.80 & 0.19 & 7.15 & 15.04 & 0.23 & 32.30 & 34.00 & -0.03 & 12.87 & 18.80 & 0.19\\
RhythmFormer~\cite{zou2025rhythmformer} & PR'25 & 14.71 & 22.49 & 0.43 & 21.11 & 25.76 & 0.04 & 6.04 & 10.84 & 0.42 & 16.14 & 20.50 & -0.11\\
PHASE-Net~\cite{zhao2025phase} & CVPR'26 & 10.04 & 15.56 & 0.65 & 2.86 & 9.66 & 0.91 & 2.56 & 3.25 & 0.96 & 10.33 & 16.20 & 0.40\\
\rowcolor[HTML]{EBFEE8} \textcolor{SkyBlue}{\textbf{FreqPhys}} \textbf{(Ours)} & - & {\textbf{0.57}} & {\textbf{1.29}} & {\textbf{0.99}} & {\textbf{2.39}} & {\textbf{5.48}} & {\textbf{0.98}} & {\textbf{2.17}} & {\textbf{2.45}} & {\textbf{0.99}} & {\textbf{10.23}} & {\textbf{15.14}} & {\textbf{0.44}}\\
\bottomrule\bottomrule
\end{tabular}}
\end{table*}

\subsection{Three-source Domain Generalization Evaluation}
Following~\cite{zhao2025phase}, we further evaluate domain generalization under the three-source setting, where models are trained on three datasets and tested on the remaining unseen domain. The results are reported in Tab.~\ref{tab:three_DG}.
Overall, \textcolor{SkyBlue}{\textbf{FreqPhys}} achieves the best performance across most settings. For example, under {P+B+M$\rightarrow$U}, our method achieves an RMSE of 1.29, significantly outperforming the SOTA PHASE-Net (RMSE: 15.56). Similarly, under {U+B+M$\rightarrow$P} and {P+U+M$\rightarrow$B}, our method achieves RMSE of 5.48 and 2.45, respectively, again surpassing all competing approaches. These results demonstrate that incorporating physiological frequency priors helps learn more robust representations across heterogeneous source domains.

In addition, we observed an interesting result that increasing the number of source datasets does not necessarily improve cross-domain performance. In particular, under the {P+U+B$\rightarrow$M} setting, most deep learning methods perform worse than their counterparts in the dual-source setting (Tab.~\ref{tab:dual_DG_mmpd}). For example, RhythmFormer obtains an RMSE of 20.50 and PHASE-Net reaches 16.20, both worse than their results under dual-source training. Even our method shows a slight performance drop compared with the dual-source results.
This phenomenon indicates that naively aggregating more source datasets may introduce larger domain discrepancies and optimization conflicts, making it harder to learn consistent physiological representations across domains. Importantly, this issue is observed across nearly all methods, indicating that it is a common challenge in multi-source domain generalization. Addressing this issue remains an open problem and represents an interesting direction for future research.

\subsection{Impact of Diffusion}
To evaluate the contribution of the diffusion backbone, we compare the full model with a variant that removes the iterative denoising process while keeping all frequency priors unchanged. 
The results are reported in Tab.~\ref{tab:appendix_abalation_diffusion}. 
Introducing the diffusion process leads to consistent improvements on both datasets. 
For example, on VIPL-HR the RMSE decreases from 6.64 to 6.34 and the $r$ increases from 0.82 to 0.86. 
On the more challenging MR-NIRP-Car dataset, the RMSE is reduced from 6.51 to 6.08 while the $r$ improves from 0.52 to 0.54. 
These results indicate that the diffusion-based refinement helps improve signal reconstruction when combined with the proposed frequency priors.

\begin{table}[t]
\caption{{Impact of diffusion model}.}
\label{tab:appendix_abalation_diffusion}
\centering
\renewcommand\arraystretch{1.1}
\tabcolsep 2pt
\resizebox{0.6\linewidth}{!}{
\begin{tabular}{c|ccc|ccc}
\toprule
\rowcolor[HTML]{f8f9fa} \multirow{2}{*}{\textbf{Method}} & \multicolumn{3}{c|}{\textbf{VIPL-HR}} & \multicolumn{3}{c}{\textbf{MR-NIRP-Car}}\\ 
& MAE$\downarrow$ & RMSE$\downarrow$ & $r\uparrow$ & MAE$\downarrow$ & RMSE$\downarrow$ & $r\uparrow$\\
\midrule
w/o Diffusion & 3.98 & 6.64 & {0.82} & 5.88 & 6.51 & 0.52\\
\rowcolor[HTML]{EBFEE8} \textbf{with Diffusion (Ours)} & \textbf{3.79} & \textbf{6.34} & \textbf{0.86} & \textbf{5.75} & \textbf{6.08} & \textbf{0.54}\\
\bottomrule
\end{tabular}}
\end{table}

\begin{table}[t]
\caption{{Comparison of different heart rate ranges}. We provide the RMSE$\downarrow$ results on the VIPL-HR dataset.}
\vspace{-2.5ex}
\label{tab:appendix_hr_range}
\centering
\renewcommand\arraystretch{1.0}
\tabcolsep 8pt
\resizebox{0.6\linewidth}{!}{
\begin{tabular}{l|ccc}
\toprule
\rowcolor[HTML]{f8f9fa} \textbf{Method} & \textbf{[40, 60)} & \textbf{[60, 100)} & \textbf{[100, 180]}\\
\midrule
PhysDiff~\cite{qian2025physdiff} & 8.73 & 4.57 & 13.40 \\
\rowcolor[HTML]{EBFEE8} \textcolor{SkyBlue}{\textbf{FreqPhys}} \textbf{(Ours)} & \textbf{6.83} & \textbf{4.25} & \textbf{10.73}\\
\bottomrule
\end{tabular}}
\end{table}

\begin{table}[ht!]
\caption{{Ablation study on the number of cross-attention layers in the Cross-domain Representation Learning module on the VIPL-HR dataset}.}
\vspace{-2.5ex}
\label{tab:ablation_layer}
\centering
\renewcommand\arraystretch{1.0}
\tabcolsep 16pt
\resizebox{0.6\linewidth}{!}{
\begin{tabular}{c|ccc}
\toprule
\rowcolor[HTML]{f8f9fa} \textbf{Method} & \textbf{MAE$\downarrow$} & \textbf{RMSE$\downarrow$} & \textbf{$r\uparrow$}\\
\midrule
1 & 7.42 & 10.60 & 0.66\\
2 & 4.84 & 7.82 & 0.81\\
3 & 3.90 & 6.52 & \textbf{0.86}\\
\rowcolor[HTML]{EBFEE8} \textbf{4} & \textbf{3.79} & \textbf{6.34} & \textbf{0.86}\\
5 & 3.97 & 6.60 & 0.85\\
6 & 5.19 & 7.74 & 0.82\\
\bottomrule
\end{tabular}}
\end{table}

\subsection{Impact of Different Heart Rate Ranges}
The frequency band of [0.66, 3.0] Hz (corresponding to [40, 180] bpm) is a widely adopted physiological prior in rPPG research, as it encompasses the heart rate variability of nearly all human populations. 
To evaluate performance under these representative physiological conditions, we conduct stratified statistical analyses on the VIPL-HR dataset. 
As reported in Tab.~\ref{tab:appendix_hr_range}, our method consistently surpasses SOTA PhysDiff across all HR subsets, demonstrating its robustness and reliability under diverse physiological states.

\subsection{Impact of Cross-attention Layers}
For the layers of cross-attention, we conducted ablations to analyze the optimal empirical setting of the cross-attention layer number. As shown in Tab.~\ref{tab:ablation_layer}, performance improves with the layer number up to a certain point (4 layers), after which diminishing returns are observed along with increased inference cost. We thus adopt 4 layers as a balanced parameter setting.

\section{Limitations and Future Work}\label{appendix_limitations}
Despite the promising performance of \textcolor{SkyBlue}{\textbf{FreqPhys}}, several limitations remain. 
First, our method mainly exploits physiological frequency priors for signal reconstruction, while other physiological characteristics, such as waveform morphology or multi-harmonic structures, are not explicitly modeled. Incorporating richer physiological priors may further improve signal fidelity and interpretability. 
Second, cross-dataset generalization from controlled domains to unseen complex scenarios remains challenging. Although our method improves robustness to motion and illumination variations, noticeable performance degradation still occurs when the domain gap becomes large, indicating that learning domain-invariant physiological representations remains an important direction for future research. 
Third, we observe that increasing the number of source domains does not always improve performance in multi-source domain generalization. Under the three-source setting, many methods (including ours) perform worse than in the dual-source setting, indicating that naively combining multiple datasets may introduce conflicting domain distributions and optimization difficulties. 
Finally, the evaluation of current rPPG methods is limited by existing datasets, which all only include healthy adults with heart rates in the normal range. As a result, model performance on special populations (e.g., infants, athletes, or patients with bradycardia, tachycardia, and arrhythmias) and extreme heart rate conditions remains largely unexplored. Building larger and more diverse datasets covering broader physiological conditions will be important for advancing robust and clinically reliable remote physiological measurement.

%
%
\end{document}